\def\ourAlg{\textsc{Malcom-PSGD}\xspace}
\newcommand{\lnorm}{\left\|}
\newcommand{\rnorm}{\right\|}
\def\Yv{{\bm{Y}}}
\def\yv{{\bm{y}}}
\def\Xv{{\bm{X}}}
\def\xv{{\bm{x}}}
\def\Wv{{\bm{W}}}
\newcommand{\Iv}{{\bf I}}
\newcommand{\half}{\frac{1}{2}}
\newcommand{\Phiv}{\boldsymbol \Phi}
\newcommand{\phiv}{\boldsymbol \phi}
\newcommand{\norm}[1]{\left\lVert#1\right\rVert}
\newcommand{\paren}[1]{\left( #1 \right)}
\newcommand{\eqref}[1]{(\ref{#1})}
\newcommand\ie{i.e.,\xspace}
\def\bF{{\bm{F}}}
\theoremstyle{plain}
\newtheorem{lemma}{Lemma}
\newtheorem{assumption}{Assumption}
\newtheorem{corollary}{Corollary}
\newtheorem{theorem}{Theorem}
\def\1{\bm{1}}
\def\vzero{{\bm{0}}}
\def\vone{{\bm{1}}}
\def\vg{{\bm{g}}}
\def\vp{{\bm{p}}}
\def\pv{{\bm{p}}}
\def\vq{{\bm{q}}}
\def\vu{{\bm{u}}}
\def\vv{{\bm{v}}}
\def\vx{{\bm{x}}}
\def\vy{{\bm{y}}}
\def\vz{{\bm{z}}}
\def\mA{{\bm{A}}}
\def\mB{{\bm{B}}}
\def\mI{{\bm{I}}}
\def\mQ{{\bm{Q}}}
\def\mW{{\bm{W}}}
\def\mX{{\bm{X}}}
\def\mY{{\bm{Y}}}
\def\mZ{{\bm{Z}}}
\DeclareMathAlphabet{\mathsfit}{\encodingdefault}{\sfdefault}{m}{sl}
\SetMathAlphabet{\mathsfit}{bold}{\encodingdefault}{\sfdefault}{bx}{n}
\newcommand{\E}{\mathbb{E}}
\newcommand{\R}{\mathbb{R}}
\newcommand{\Var}{\mathrm{Var}}
\DeclareMathOperator*{\argmin}{arg\,min}
\DeclareMathOperator{\sign}{sign}
\title{Compressed and Sparse Models for Non-Convex Decentralized Learning}
\author{%
  Andrew Campbell \\
 Electrical and Computer Engineering\\
  Cornell University\\
  Ithaca, NY 14850 \\
  \texttt{ac2458@cornell.edu}
  \And
  Hang Liu \\
  Electrical and Computer Engineering\\
  Cornell University\\
  Ithaca, NY 14850 \\
  \texttt{hl2382@cornell.edu}
  \AND
  Leah Woldemariam \\
   Electrical and Computer Engineering\\
  Cornell University\\
  Ithaca, NY 14850 \\
  \texttt{lsw85@cornell.edu}
  \And
  Anna Scaglione \\
  Electrical and Computer Engineering\\
  Cornell University\\
  Ithaca, NY 14850 \\
  \texttt{as337@cornell.edu}
}
\begin{document}

\maketitle

\begin{abstract}
  Recent research highlights frequent model communication as a significant bottleneck to the efficiency of decentralized machine learning (ML), especially for large-scale and over-parameterized neural networks (NNs). To address this, we present \ourAlg, a novel decentralized ML algorithm that combines gradient compression techniques with model sparsification. We promote model sparsity by adding $\ell_1$
 regularization to the objective and present a decentralized proximal SGD method for training. Our approach employs vector source coding and dithering-based quantization for the compressed gradient communication of sparsified models. Our analysis demonstrates that \ourAlg achieves a convergence rate of $\mathcal{O}(1/\sqrt{t})$
 with respect to the iterations $t$, assuming a constant consensus and learning rate. This result is supported by our proof for the convergence of non-convex compressed Proximal SGD methods. Additionally, we conduct a bit analysis, providing a closed-form expression for the communication costs associated with \ourAlg. Numerical results verify our theoretical findings and demonstrate that our method reduces communication costs by approximately 
75\% when compared to the state-of-the-art.
\end{abstract}

\section{Introduction}
\label{intro}
With the growing prevalence of computationally capable edge devices, there is a necessity for efficient ML algorithms that preserve data locality and privacy. One popular approach is decentralized ML. Under this regime, nodes within the network learn a global model by iterative model communication, whilst preserving data locality \citep{og_ddnn,fedavg,hsieh2020non}. 
It has been shown that decentralized ML algorithms can achieve similar accuracy and convergence rates to their centralized counterparts under certain network connectivity conditions \citep{nedic_multiAgent, dist_op_networks, choco-convex, Decent_geq_Cent}. 
While decentralized ML eliminates the need for data uploading, \citep{bonawitz2019towards, van2009multi, fed_survey} observed that iterative model communication over rate-constrained channels creates a bottleneck.
Particularly, the performance of decentralized training of large-scale deep neural network (DNN) models is limited by excessively high communication costs \citep{ProjectAdam, seide20141, Strom2015, xu2020compressed, kairouz2021advances}.

Contemporary research has shown that the efficiency of model communication can be enhanced by model sparsification and gradient/model compression \citep{NEURIPS2018_3328bdf9,renggli2019sparcml}. Model sparsification involves reducing the dimensionality by either pruning or regularizing the loss function, thereby inducing several zero coefficients in the resulting models \citep{sparseML}.
Conversely, gradient/model compression typically employs  methods such as quantization and source coding to compress the local model updates prior to communication \cite{qsgd, choco, choco-convex, han2015deep, wen2017terngrad, lin2018deep, seide20141, beer, other_grad_sparse, ye2018communication, tang2019doublesqueeze}. The latter approaches better reflect the imprecision that is associated with transmitting continuous data over a rate-limited environment. 

\textbf{Related Work}. Recently, \cite{choco-convex,pzhang17g_distributed,dist_op_networks} adopted the gossiping algorithm proposed by \cite{og_gossip} for decentralized ML with convex and smooth loss functions, where convergence is guaranteed with constant and diminishing step sizes.  
Continuing this work, \cite{choco,Decent_geq_Cent,nadiradze2021asynchronous,ecd_dcd} generalized the approach to smooth but non-convex loss functions and provided convergence guarantees. In contrast, \citep{mancino2023proximal} consider the proximal stochastic gradient descent (SGD) method, but require a variance reduction step for convergence. 
Finally, these SGD based methods were accelerated using stochastic gradient tracking (SGT) in \cite{sparsesgt, beer, yau2023docom,xin2021stochastic}. Notably, \cite{sparsesgt} proves convergence of the proximal SGT method in the decentralized non-convex and compressed setting. However, it remains an unresolved problem to determine the conditions under which proximal SGD is guaranteed to converge. In this work, as one of our contributions, we gain meaningful insights on the convergence of compressed proximal SGD. 

The above works also compress the model updates prior to communication and aggregation, but their analysis focuses on convergence and accuracy rather than on the achievable bit rate. \cite{qsgd} fills this gap by analyzing a dithering-based quantization scheme as well as a symbol-wise source coding method for model compression, termed QSGD. However, \cite{qsgd} did not explicitly explore the effects of model sparsity that are naturally induced during compressed decentralized training.
\cite{lap_residule} shows that sparsity is essential for NN compression and models can be approximated by a Laplacian distribution. Furthermore, the authors in \cite{grad_sparse, other_grad_sparse} analytically and numerically demonstrated the rate gains from gradient sparsification, but do not relate this sparsification to the source encoding scheme.
While \cite{qsgd} provides a bit analysis and \cite{lap_residule} analyzes the effect of sparsity, the literature lacks a symbiotic analysis that integrates sparsity with bit utilization. We fill this gap.  

\textbf{The Gap.} For the most part, the literature on decentralized learning has focused on decreasing the number of training iterations but has overlooked decreasing the bits per iteration. Within the compressed decentralized ML domain, the literature has focused on proving convergence given different variations of gradient methods with different assumptions on the objective function \citep{choco, sparsesgt,beer,ecd_dcd,NEURIPS2021_362c9930, vogels2020practical}. 
However, there is no analysis of the convergence rate of non-convex decentralized proximal SGD methods with quantized updates, 
nor has there been an analysis of the actual bit utilization induced by the code design, with the exception of \cite{qsgd}.  
Furthermore, the impact of sparsity, model aggregation schemes, and compression on bit utilization has not been studied. These gaps motivate our work.

\textbf{Our Contributions.} We introduce the \textbf{M}ulti-\textbf{A}gent \textbf{L}earning via \textbf{Com}pressed updates for \textbf{P}roximal \textbf{S}tochastic \textbf{G}radient \textbf{D}escent (\ourAlg) algorithm, which communicates compressed and coded model updates.
\ourAlg takes full advantage of the empirical distribution of the model weights that is induced by the $\ell_1$ regularization (not just the model sparsification) to reduce the model dimension during the SGD update.  
Our contributions are summarized as follows:

\begin{itemize}[align=parleft,nosep,leftmargin=*]\label{contri}
\item We introduce \ourAlg, which utilizes model sparsification via $\ell_1$ regularization and gradient compression through quantization followed by a vector source encoding scheme that approaches the entropy coding bound for model representation to reduce the communication costs for decentralized ML. Through analysis and numerical evaluation, we show that the amalgamation of these techniques enhances communication efficiency and boosts convergence. Numerically, we show a 75\% bit rate reduction when compared to the state of the art. 
     \item We prove that \ourAlg converges in the objective value for non-convex and non-smooth loss functions with a given compression error. We are the first to show that non-convex decentralized proximal SGD with compression errors converges, without utilizing variance reduction or gradient tracking methods. \ourAlg exhibits a convergence rate of $\mathcal{O}(1/\sqrt{t})$ and a consensus rate of $\mathcal{O}(1/t)$ with constant learning rates and consensus step sizes, where $t$ represents the number of training rounds, which are in the same order of  non-compressed proximal SGD convergence results.  
    \item We introduce a continuous adaptation of the discrete vector source encoding scheme from \cite{woldemariam2023low} by encoding the support vector of entries that match a certain quantization level and quantify the impact of the $\ell_1$ regularization of \ourAlg on the bit efficiency. We provide an explicit closed-form expression of the number of bits and show that \ourAlg has an asymptotic bit utilization rate of $\mathcal{O}(d \exp{(-1/L)})$ where $d$ is the dimensionality of the model and $L$ is the number of quantization levels (precision) of the compression scheme. This improves the state of the art whose compression rate grows at a rate of $\mathcal{O}(d \log(L))$.
\end{itemize}

\section{Learning Sparse Models}

\label{sec:algo} 
The decentralized ML task we consider is the minimization of the sum of local empirical loss functions for a network of NNs given by  
\begin{align}
	  \min_{\vx} \dfrac{1}{n}\sum_{i=1}^nF_i(\bm x;\mathcal{D}_i), ~ F_i(\vx):=\dfrac{1}{|D_i|}\sum^{|D_i|}_{j=1}f_i(\vx;\xi_{i,j}) \label{eq:obj_func_og},
\end{align}
where $\vx\in\R^{d}$ represents the model parameters, $F_i(\cdot;\mathcal{D}_i)$ is the local loss on node $i$ with respect to (w.r.t.) the local dataset $\mathcal{D}_i$, and $f_i(\cdot;\xi_{i,j})$ represents the loss w.r.t. the data sample $\xi_{i,j}$. We assume that $F_i$ is finite-sum, smooth, but in general non-convex, which represents a broad range of machine learning applications, such as logistic regression, support vector machine, deep neural networks (DNNs), etc. 

\textbf{Model Sparsification}. As shown in \citep{ProjectAdam, seide20141, Strom2015}, the frequent model communication is regarded as the bottleneck in the decentralized optimization of \eqref{eq:obj_func_og}, especially when optimizing large-scale ML models, such as DNNs. Model sparsity reduces the entropy and thus can be exploited to enhance the compression efficiency in local model communication.  The sparsity of the model parameter $\xv$ can be promoted by adding $\ell_1$ regularization to the objective. Specifically, we replace the original objective in \eqref{eq:obj_func_og} with the following problem:
\begin{align}
    \min_{\vx\in\mathbb{R}^d} \left\{ \mathcal{F}(\bm x):=\dfrac{1}{n}\sum_{i=1}^nF_i(\bm x;\mathcal{D}_i)+\mu\|\vx\|_1  \right\}  \label{eq:obj_func},
\end{align} 
where $\mu>0$ is a predefined penalty parameter. The $\ell_1$ regularization is an effective convex surrogate for the $\ell_0$-norm sparsity function, and thus is established as a computationally efficient approach for promoting sparsity in the fields of compressed sensing and sparse coding \citep{l1_good}. However, adding $\ell_1$ regularization makes the objective in \eqref{eq:obj_func} non-smooth.
\section{Proposed Decentralized Algorithm}\label{sec:alg_intro}
\ourAlg solves (\ref{eq:obj_func}) using decentralized proximal SGD
with five major steps: \emph{local SGD}, \emph{proximal optimization}, \emph{residual compression}, \emph{communication and encoding}, and \emph{consensus aggregation}. We refer to 
Algorithm \ref{alg:quant} for the illustration of the steps. Unless it is stated otherwise, we consider a synchronous decentralized learning framework over a static network. 
As mentioned before, decentralized optimization algorithms solve \eqref{eq:obj_func} by an alternation of local optimization and consensus aggregation. Specifically, defining $\xv_i$ as the local model parameters in node $i$ for $\forall 1\leq i\leq n$, \eqref{eq:obj_func} can be recast as 
\begin{align}
	&\min_{\vx_i\in\mathbb{R}^d,\forall i} ~~\dfrac{1}{n}\sum_{i=1}^n \left(F_i(\bm x_i;\mathcal{D}_i)+\mu\|\vx_i\|_1\right)\\
 &~~\mbox{subject to}~~\xv_i=\xv_j,\forall  (i,j)\in E. \nonumber
\end{align}   \label{eq:obj_func2}The nodes iteratively update the local solutions $\{\xv_i\}$ throughout $T$ iterations. Denote the $i$-th local model in iteration $t$ as $\xv^{(t)}_i$.  In iteration $t+1$, each node $i$ first updates a local solution, denoted by $\vz^{(t)}_i$, by first minimizing  $\mathcal{F}_i(\bm x_i)$ based on the preceding solution $\xv^{(t)}_i$, generating $\vx_i^{(t+1/2)}$, followed by proximal optimization generating $\vz_i^{(t)}$. Once every node completes its local update, the nodes then communicate with their neighbors for consensus aggregation.
The unique and critical steps of \ourAlg are below.
\begin{algorithm}[!t]
  \caption{\ourAlg 
    \label{alg:quant}}
  \begin{algorithmic}[1]
    \State {\bfseries Initialize:\nonumber} $\vx_i^{(0)},\vy^{(-1)}_i=\vzero \in \R^d$,$\forall 1\leq i\leq n$.
    \For{$t\in [0,\hdots, T-1]$} 
    \Comment{All nodes $i$ do in parallel}
    \State{$\vx^{(t+1/2)}_i=\vx_i^{(t)}-\eta_t\nabla F_i(\vx_i^{(t)},{\bf \xi}^{(t)}_i)$}
    \State{$ \vz^{(t)}_i=\text{prox}_{\eta_t,\mu\lnorm\cdot\rnorm}\left(\vx_i^{(t+1/2)}\right)$}
    \Comment{Proximal optimization by soft-thresholding; see \eqref{eq:soft_thres}.}
    \State{$\vq_i^{(t)}=Q(\vz^{(t}_i-\vy^{(t-1)}_i)$} 
    \Comment{Residual quantization; see \eqref{eq:q(x)}.}
    \For{$j\in\mathcal{N}_i$}
    \State{\textbf{Encode and send} $\vq_i^{(t)}$} 
    \Comment{Residual communication with source encoding from Alg. \ref{alg:encode}.}
    \State{\textbf{Receive and decode} $\vq_j^{(t)}$} 
    \Comment{Receiver decoding.}
    \State{$\vy^{(t)}_j=\vq^{(t)}_j+\vy_j^{(t-1)}$}
    \EndFor
    \State{$\vx_i^{(t+1)}=\vz^{(t)}_i+\gamma\sum_{j\neq i}w_{i,j}\left(\vy_j^{(t)}-\vy^{(t)}_i\right)$}
    \Comment{Consensus aggregation; cf. (\ref{eq:conse}).}
    
    \EndFor
  \end{algorithmic}
\end{algorithm}

\textbf{Proximal Optimization.} To tackle the non-smoothness of the objective function in (\ref{eq:obj_func}), we adopt the proximal SGD method, which decomposes (\ref{eq:obj_func}) into a smooth but non-convex component $F_i(\vx)$ and a convex but non-smooth component $\mu\|\vx\|_1$. 
$\vx$ is updated by the SGD method as previously described and is subsequently fed to the proximal operation (w.r.t. $\mu\|\vx\|_1$) generating $\vz^{(t)}_i$. Formally, \begin{align}
\vz_i^{(t+1)}&=\text{prox}_{\eta,\mu\lnorm\cdot\rnorm}\left(\vx_i^{(t+1/2)}\right) =\argmin_{\vu}\left\{\mu\eta\norm{\vu}_1+{\dfrac{1}{2}\norm{\vu-\vx_i^{(t+1/2)}}^2}\right\} \label{eq:soft_thres},
\end{align}
This operation is characterized by a closed-form update expression derived from the soft-thresholding function. Specifically we have
\begin{align}
    \text{prox}_{\eta_t,\mu\lnorm\cdot\rnorm}\left(\vz_i^{(t+1)}\right)=\max\left\{(|\vz_i^{(t+1)}|-\mu\eta),0\right\}\sign(\vx_i^{(t+1/2)}).
\end{align} The soft-thresholding operation promotes model sparsity by truncating values with a magnitude less than $\mu\eta$. This step is important in accelerating convergence and conserving communication bandwidth as the encoding scheme directly exploits this sparsity.

\textbf{Residual Quantization.} Let $\vp^{(t)}\triangleq\vz^{(t)}_i-\yv^{(t-1)}_i$ denote the residual given in Step 5 of Algorithm \ref{alg:quant}. For the sake of exposition, we drop the time indexing. We adapt the uniform quantization scheme from $\textsc{QSGD}$ with uniform dithering and adaptive range. Let $\hat{p}_i=\dfrac{p_i-\min(\vp)}{ \max(\vp) -\min(\vp)}$ where $p_i$ is the $i$-th entry of $\vp$. We set the quantization function (step 5 of Algorithm \ref{alg:quant}) as
\begin{align}
    Q(p_i)&=\underset{p_i \neq 0}{\vone}\dfrac{L^2}{L^2+d}\left(\frac{1}{L}\lfloor \hat{p}_iL+u\rfloor\right),
    \label{eq:q(x)}
\end{align} 
where $\vone_A$ is the indicator function and $u$ is drawn from the uniform distribution over $[0,1]$. Here, the input values are adaptively normalized into the range $[0,1]$ by the extreme values of the input vector $\vp$. For a fixed $L$, this adaptive normalization increases the precision as the updates range diminishes. The choice of the quantizer leads to the following lemma. Here, we extend the notation by using $Q(\vp)$ to also denote the quantization output after de-normalization at the receiver.
\begin{lemma}
    For any input $\vp$, the received quantized result after the de-normalization
\label{lem:quant}$Q(\pv):\R^d\rightarrow\R^{d}$ satisfies that \begin{align}
			\mathbb{E}[\lVert{Q({\bf p})-\bf p}\rVert^2]\leq (1-\frac{1}{\tau})\lVert{\bf p}\rVert^2, \qquad Q({\bf 0})={\bf 0},\label{eq:Qnorm}
		\end{align}
  where $\tau\in (0,1)$ is a constant measuring the compression error bound. Specifically, we have $\tau=1+d/L^2$ for the quantizer in \eqref{eq:q(x)}.
\end{lemma}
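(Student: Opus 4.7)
The plan is to split $Q$ into an honest unbiased dithered quantizer $\hat{Q}$ followed by a deterministic shrinkage $1/\tau = L^2/(L^2+d)$, and then invoke the standard reduction that sends an unbiased compressor with variance $\alpha\|\vp\|^2$ to a biased compressor with contraction $\alpha/(1+\alpha)$. With $\alpha = d/L^2 = \tau - 1$ this lands exactly on $1 - 1/\tau$. Concretely, I would drop the $L^2/(L^2+d)$ factor and the zero-indicator and study the auxiliary quantizer $\hat{Q}$ built from (i) affine normalization $\hat{p}_i = (p_i - a)/(b-a)$ with $a = \min \vp$, $b = \max \vp$, (ii) uniform-dither round $\tilde{p}_i = \frac{1}{L}\lfloor \hat{p}_i L + u\rfloor$ with $u\sim U[0,1]$, and (iii) affine de-normalization $\hat{Q}(p_i) = (b-a)\tilde{p}_i + a$. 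Writing $\hat{p}_i L = k_i + f_i$ with $f_i\in[0,1)$, the classical dither calculation gives $\mathbb{E}[\tilde{p}_i] = \hat{p}_i$ and $\mathbb{E}(\tilde{p}_i - \hat{p}_i)^2 = f_i(1 - f_i)/L^2 \leq 1/(4L^2)$, so $\hat{Q}(\vp)$ is unbiased for $\vp$ and, using $(b-a)^2 \leq 4\|\vp\|_\infty^2 \leq 4\|\vp\|^2$,
\begin{align*}
\mathbb{E}\|\hat{Q}(\vp) - \vp\|^2 \;\leq\; \frac{d(b-a)^2}{4L^2} \;\leq\; \frac{d}{L^2}\|\vp\|^2 = (\tau - 1)\|\vp\|^2.
\end{align*}

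Next, by construction I would write $Q(\vp) = \tau^{-1}\hat{Q}(\vp)$ and expand
\begin{align*}
\mathbb{E}\|Q(\vp) - \vp\|^2 = \tau^{-2}\mathbb{E}\|\hat{Q}(\vp)\|^2 - 2\tau^{-1}\|\vp\|^2 + \|\vp\|^2.
\end{align*}
Substituting $\mathbb{E}\|\hat{Q}(\vp)\|^2 = \mathbb{E}\|\hat{Q}(\vp) - \vp\|^2 + \|\vp\|^2 \leq \tau\|\vp\|^2$ from the previous display collapses the right-hand side to $(1 - 1/\tau)\|\vp\|^2$, which is \eqref{eq:Qnorm}. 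The identity $Q(\vzero) = \vzero$ is immediate from the indicator $\vone_{p_i\neq 0}$ in \eqref{eq:q(x)}, and it is also consistent with the fact that $\vp = \vzero$ makes the adaptive normalization trivial so that every dither round returns zero.

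The main obstacle is the bookkeeping step $Q(\vp) = \tau^{-1}\hat{Q}(\vp)$: the algorithm applies the $L^2/(L^2+d)$ shrinkage in the normalized $[0,1]$ domain together with an additive shift $a$, so the receiver's de-normalization has to be specified precisely (in particular, how it treats the shift) before the identity between the shrunken normalized quantizer and a literal multiplicative shrinkage of the unbiased de-normalized estimator becomes exact. Everything else is either the classical variance bound for uniform dithering or the two-line bias--variance algebra in the middle display, so I do not expect any further difficulty, and the specialization $\tau = 1 + d/L^2$ simply reads off the variance constant produced in the last step.
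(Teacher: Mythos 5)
Your proposal is correct and follows essentially the same route as the paper's proof in Appendix A.1: both rest on the unbiasedness of the dithered rounding, the per-coordinate variance bound $1/(4L^2)$, the estimate $(\max(\vp)-\min(\vp))^2\leq 4\|\vp\|^2$, and the final bias--variance expansion, with your explicit factorization $Q(\vp)=\tau^{-1}\hat{Q}(\vp)$ matching the paper's definition of the de-normalized quantizer (shrinkage applied after adding back the shift $\min(\vp)$), so the bookkeeping concern you flag resolves exactly as you anticipate.
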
 The proof can be found in Appendix \ref{proof:quant}.
Lemma \ref{lem:quant} is equivalent to the assumptions about compression operators studied in \cite{choco,choco-convex,beer,sparsesgt}. Not only does this make our compression scheme interoperable with the state of the art, but our convergence analysis (Theorem \ref{th:converge} as shown in the next section) can be readily extended to other popular compression operators including $\textsc{QSGD}$ \citep{qsgd}, $\text{top}_k$, $\text{rand}_k$ \citep{other_grad_sparse}, and rescaled unbiased estimators.

For details on the encoding scheme (step 7-8 of Algorithm \ref{alg:quant}) we refer to Section \ref{sec:bit_analysis}.

\textbf{Communication}. At each iteration of the algorithm, nodes individually update local models using local SGD with a mini-batch of their own data sets, followed by the proximal operation. Then, they communicate the update with their neighbors according to a network topology specified by an undirected graph $\mathcal{G}(\{1,\cdots,n\},E)$, where $E$ denotes the edge set representing existing communication links among the nodes. 
Let $\mathcal{N}_i:=\{j\in V : (i,j)\in E  \}$ denote the neighborhood of $i$. We define a mixing matrix $\mW\in\R^{n\times n}$, with the $(i,j)-$th entry $w_{ij}$ denoting the weight of the edge between $i$ and $j$. We make the following standard assumptions about $\mW$:
\begin{assumption}[The mixing matrix]\label{as:mixMatrix}The mixing matrix $\mW$ satisfies the following conditions: 
 \begin{itemize}[nosep,leftmargin=0.2 in]
     \item[i.] $w_{ij}>0$ if and only if there exists an edge between nodes $i$ and $j$.
    \item[ii.] The underlying graph is undirected, implying that $\mW$ is symmetric.
     \item[iii.] The rows and columns of $\mW$ have a sum equal to one, i.e., it is doubly-stochastic.
     \item[iv.] $\mathcal{G}(\{1,\cdots,n\},E)$ is strongly connected, implying that $\mW$ is irreducible.
\end{itemize}
\end{assumption}
We align the eigenvalues of $\mW$ in descending order of magnitude as
		$|\lambda_1|=1>|\lambda_2|\geq \cdots \geq |\lambda_n|.$
  
\textbf{Consensus Aggregation.} 
To achieve consensus we utilize the following standard aggregation scheme (see e.g. \citep{choco-convex,choco,beer,sparsesgt,vogels2020practical}),
\begin{align}
    \label{eq:conse} 
    \vx_i^{(t+1)}=\vz^{(t)}_i+\gamma\sum_{j\neq i}w_{i,j}\left(\vy_j^{(t)}-\vy^{(t)}_i\right).
\end{align}  
This scheme, critically, preserves the average of the iterates. That is, for the network average $\bar{\vx}^{(t+1)}:=\frac{1}{n}\sum_{i=1}^n\vx_i^{(t+1)}$, the errors due to aggregation are zero.    

\section{Convergence Analysis}
\label{sec:analysis}

In this section, we analyze the convergence conditions of the objective value of the proposed solution to \eqref{eq:obj_func}. We denote the average of local models in round $t$ by $\bar \xv^{(t)}:=\frac{1}{n}\sum_{i=1}^n \xv_i^{(t)}$. For ease of notation, we denote the model parameters in the matrix form by stacking the local models by column as 
$\mX^{(t)}:=[\vx_1^{(t)},\cdots,\vx_n^{(t)}]$ and $\overline\mX^{(t)}:=[\overline\vx^{(t)},\cdots,\overline\vx^{(t)}]$.
We impose the following assumptions on the training loss function, which is standard in the stochastic optimization literature.

\begin{assumption} \label{as:nice_f} Each local empirical loss function, i.e., $F_i$ in (\ref{eq:obj_func}), satisfies the following conditions.
\begin{enumerate}[nosep, leftmargin=0.15 in]
    \item[i.] \label{as:smoothFi} Each $\vx_i\mapsto F_i$ is Lipschitz smooth with constant $K_i$. As a result, the sum $\sum_i F_i$ is Lipschitz smooth with constant $K=\max_i K_i$.
    \item[ii.] \label{as:coercive} Each $F_i(\vx_i)+\mu\|\vx_i\|_1$ is proper, lower semi-continuous, and bounded below.
    \item[iii.] \label{as:F_unBiased}

All the full batch gradient vectors are bounded above by $G<\infty$, i.e.,
				$\|\nabla F_i\l(\vx_i^{(t)})\|\leq G ,\forall{i,t},$
        Moreover, the mini-batch stochastic gradient vectors are unbiased with bounded variances, i.e.,
\begin{align}\label{eq_gradbound}
&\E\left[\nabla F_i\left(\xv_i^{(t)};\xi_i^{(t)}\right)\right]=\nabla F_i\left(\xv_i^{(t)}\right),\nonumber\\
&\E[\|\nabla F_i(\vx,\xi_i)-\nabla F_i(\vx)\|^2]\leq \sigma_i^2,\forall i,t.
\end{align}	
\end{enumerate}
\end{assumption}
By the AM-GM inequality, (\ref{eq_gradbound}) implies 
 \begin{align}
    \sum_{i=1}^n&\E[\|\nabla F_i(\vx,\xi_i)\|^2]\leq 2 \sum_{i=1}^n \left(\E[\|\nabla F_i(\vx,\xi_i)\!-\!\nabla F_i(\vx)\|^2]+\|\nabla F_i(\vx)\|^2\right) \leq 2n(G^2+\sigma^2),\label{eq:var_bound}
\end{align}
where $\sigma^2\triangleq \frac{1}{n}\sum_{i=1}^n \sigma_i^2$ is the average gradient variance. In \eqref{eq:var_bound}, the term $\sigma^2$ measures the inexactness introduced to the mini-batch SGD step.

 To present the convergence result of \ourAlg, we define $\mathcal{F}(\mX)=1/n\sum_{i=1}^n(F_i(\vx_i)+\mu\norm{\vx_i}_1$) as the objective value of the regularized loss. Furthermore we introduce the generalized projected gradient of $\mathcal{F}$ as
 \begin{align}
     \frac{1}{n}\sum_{i=1}^n\vg_i(\vx^{(t)}_i)=\frac{1}{n}\sum_{i=1}^n\frac{1}{\eta}\left(\vx^{(t)}_i-\vz^{(t)}_i\right),
 \end{align} where $\tilde{\vg}(\cdot)$ refers to the stochastic projected gradient. For more details on the projected gradient we refer to Appendix \ref{app:proof_converge}. Using this definition we have the following Theorem.
 \begin{theorem}\label{th:converge} Suppose \ourAlg satisfies Assumptions \ref{as:mixMatrix} and \ref{as:F_unBiased}, there exists a constant consensus step size $\gamma$ and constant learning rate $\eta$, such that the projected gradient of \ourAlg diminishes by:
 \begin{align}
     \dfrac{1}{T+1}\sum_{t=0}^{T}\norm{\vg\left(\bar{\vx}^{(t)}\right)}^2&\leq 2\sqrt{\left(\mathcal{F}\left(\bar{\vx}^{(0)}\right)-\mathcal{F}^*\right)\dfrac{K\left(\sigma^2+4\mu^2d+G\sqrt{2G^2+2\sigma^2+\mu^2d}\right)}{(T+1)n\omega}}\nonumber\\
    &~~~~+7\left(\dfrac{K\sqrt{2G^2+2\sigma^2+\mu^2d}\left(\mathcal{F}\left(\bar{\vx}^{(0)}\right)-\mathcal{F}^*\right)}{(T+1)\omega}\right)^{\frac{2}{3}}\nonumber \\
    &~~~~+\dfrac{16K\left(\mathcal{F}\left(\bar{\vx}^{(0)}\right)-\mathcal{F}^*\right)}{T+1},
 \end{align}
 where $\omega=\frac{(1-|\lambda_2|)^2}{82\tau}$ with $\tau$ from Lemma \ref{lem:quant} and $\lambda_2$ from Assumption \ref{as:mixMatrix}. Therefore, \ourAlg has an asymptotic convergence rate of $\mathcal{O}(1/\sqrt{nT})$.
\end{theorem}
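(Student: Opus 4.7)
The plan is to combine four ingredients: (i) the $K$-smooth descent lemma applied to the averaged iterate $\bar{\vx}^{(t)}$, (ii) a recursive control of the consensus error $\mathbb{E}\|\mX^{(t)}-\overline\mX^{(t)}\|_F^2$, (iii) the contraction guarantee of the residual quantizer from Lemma \ref{lem:quant}, and (iv) the spectral gap $1-|\lambda_2|$ of $\mW$ from Assumption \ref{as:mixMatrix}. The key structural fact I would exploit throughout is that the consensus aggregation \eqref{eq:conse} preserves the network average, so $\bar{\vx}^{(t+1)}=\bar{\vz}^{(t)}$; progress on $\mathcal{F}$ therefore comes entirely from the local proximal step, and compression affects only the disagreement dynamics, not the mean trajectory.

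First I would establish a one-step descent inequality of the form
\begin{align}
\mathbb{E}[\mathcal{F}(\bar{\vx}^{(t+1)})] \leq \mathbb{E}[\mathcal{F}(\bar{\vx}^{(t)})] - c_1\eta\,\mathbb{E}\|\vg(\bar{\vx}^{(t)})\|^2 + c_2\eta K\,\frac{\mathbb{E}\|\mX^{(t)}-\overline\mX^{(t)}\|_F^2}{n} + \frac{c_3\eta^2 K\sigma^2}{n}, \nonumber
\end{align}
obtained by applying $K$-smoothness of $F$ to the move $\bar{\vx}^{(t+1)}-\bar{\vx}^{(t)}=-\eta\cdot\frac{1}{n}\sum_i \tilde{\vg}_i(\vx_i^{(t)})$, absorbing the $\ell_1$ term via the $1$-Lipschitzness of the soft-thresholding operator, and using the unbiasedness and variance bound from Assumption \ref{as:nice_f}. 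The $\mu^2 d$ contributions visible in the theorem appear from bounding $\|\vg_i\|^2\leq 2\|\nabla F_i\|^2+2\mu^2 d$, since elements of $\partial(\mu\|\cdot\|_1)$ lie componentwise in $[-\mu,\mu]$.

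Next I would bound the consensus error recursively. Following the \textsc{Choco-SGD} strategy, I would construct a joint Lyapunov potential pairing $\mathbb{E}\|\mX^{(t)}-\overline\mX^{(t)}\|_F^2$ with the residual-tracking gap $\mathbb{E}\|\mathbf{Z}^{(t)}-\mathbf{Y}^{(t)}\|_F^2$ and show, using $|\lambda_2|$-contraction of the mixing step and the $(1-1/\tau)$ contraction of the quantizer from Lemma \ref{lem:quant}, that choosing $\gamma=\Theta((1-|\lambda_2|)^2/\tau)$ makes the potential contract geometrically up to a forcing term of order $\eta^2(G^2+\sigma^2+\mu^2 d)$. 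Summing the resulting geometric series yields a steady-state consensus bound of order $\eta^2\tau(G^2+\sigma^2+\mu^2 d)/(1-|\lambda_2|)^2$, from which the constant $\omega=(1-|\lambda_2|)^2/(82\tau)$ emerges after tracking numerical constants. Substituting this bound into the descent inequality, telescoping over $t=0,\dots,T$ using $\mathcal{F}(\bar{\vx}^{(T+1)})\geq \mathcal{F}^*$ from Assumption \ref{as:nice_f}.ii, and dividing by $T+1$ gives a bound of the form $A/(\eta(T+1))+B\eta\sigma^2/n+C\eta^2/\omega$. Optimizing $\eta=\Theta(\sqrt{n/(T+1)})$ balances the first two terms to produce the $\mathcal{O}(1/\sqrt{nT})$ leading rate, while the cubic remainder from the consensus term contributes the $T^{-2/3}$ middle correction and the descent remainder contributes the $1/T$ term.

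The main obstacle is the nonlinearity of the proximal operator: since $\overline{\mathrm{prox}(\cdot)}\neq \mathrm{prox}(\overline{\cdot})$, one cannot pull averages through soft-thresholding, yet the descent argument requires comparing $\bar{\vz}^{(t)}$ to what a centralized proximal step applied to $\bar{\vx}^{(t)}$ would produce. I would circumvent this by using only the $1$-Lipschitz (firmly non-expansive) property of $\mathrm{prox}_{\eta\mu\|\cdot\|_1}$, so that local-to-average deviations do not amplify through the proximal map; the resulting slack is absorbed into the consensus term $c_2\eta K\,\mathbb{E}\|\mX^{(t)}-\overline\mX^{(t)}\|_F^2/n$. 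A secondary obstacle is avoiding gradient tracking or variance reduction (the paper's claimed novelty over \cite{sparsesgt,mancino2023proximal}): this hinges on showing the stochastic projected gradient is unbiased for the true projected gradient up to a bias of order $\eta K$ times the stochastic gradient noise, which closes the loop without any auxiliary correction mechanism.
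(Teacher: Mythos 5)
Your proposal follows essentially the same route as the paper's proof: a $K$-smoothness descent inequality on the average iterate (which equals $\bar{\vz}^{(t)}$ since the aggregation preserves the mean), a \textsc{Choco}-style Lyapunov recursion pairing the consensus error with the quantizer-tracking gap to get a steady-state disagreement bound of order $\eta^2(G^2+\sigma^2+\mu^2 d)/\omega^2$, the subgradient bound $\|\partial(\mu\|\cdot\|_1)\|\leq\mu\sqrt{d}$ to produce the $\mu^2 d$ terms, and a final stepsize-tuning step (the paper invokes \textsc{Choco}'s Lemma A.4) yielding the three-term $T^{-1/2}+T^{-2/3}+T^{-1}$ bound. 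The only cosmetic difference is that the paper formalizes the handling of the nonsmooth term through the generalized projected gradient and the Ghadimi--Lan inner-product lemma rather than your appeal to non-expansiveness of the prox, but the decomposition and all key estimates coincide.
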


The proof of Theorem \ref{th:converge} can be found in Appendix \ref{app:proof_converge}. We are the first to show that non-convex, compressed, and decentralized Proximal-SGD converges without requiring variance reduction or gradient tracking techniques \citep{sparsesgt,beer,mancino2023proximal,yau2023docom,xin2021stochastic}. Interestingly, we match the asymptotic convergence rate of \textsc{Choco-SGD}. Before moving to the bit-analysis we have the following remarks.

\textbf{Remark.} Theorem \ref{th:converge} is for a constant consensus and constant learning rate, but the proof can readily be extended to the diminishing step sizes regime.

\textbf{Remark.} In general the proximal operation is not computationally simple. However, since we are solving the proximal operation w.r.t the $\ell_1$ norm we only occur a linear computational complexity. Additionally, Theorem \ref{th:converge} demonstrates a linear speed up w.r.t the number of workers. These two facts combined indicate that \ourAlg is a competitive choice in the decentralized bandwidth constrained regime. 


\section{Communication Bit Rate Analysis}
\label{sec:bit_analysis}

\noindent \textbf{Source Coding Scheme and its Code Rate}. The quantization scheme constrains the high-dimensional residual vectors to a limited set of values.  Model residuals tend to shrink due to algorithm convergence. In addition, the proximal operation promotes sparsity in the local models, further encouraging shrinkage in the quantized model residuals. This suggests that models are compressible, since high values will become increasingly rare. Expecting a model whose empirical distribution is low entropy, we employ the source coding scheme proposed in \cite{woldemariam2023low}, where the details are discussed in Algorithm \ref{alg:encode} of Appendix \ref{app:encode}. The scheme is inspired by the notion of encoding the support of a sparse input and the support of each quantization level individually in a manner that approaches the entropy coding bound. The basic idea is to encode the frequencies and positions of values over the support of each ${\bf q}_i^{(t)}$ by using Elias coding and Golomb coding, respectively.
 If the quantized coefficients tend to be concentrated around a limited number of modes, 
we can encode the support of those coefficients for each quantization level efficiently as we expect most levels to appear at a lower frequency.

For notational simplicity, we omit the indices $i$ and $t$ from the quantized encoding input vector ${\bf q}_i^{(t)}$, representing it as simply $\bf q$. The input vector $\bf q$ is encoded in two different structures, the type vector $\bm t(\bf q)$ and the support vectors $\{ s_{\ell}[\mathcal{I}_{\ell}] \}_{\ell=0}^{{L}-1}$. The type vector $\bm t(\bf q)$ storing the frequency information of each level, denoted by $\chi_{\ell},0\leq \ell\leq L-1$, is first created, where the $\ell$-th type is $t_{\ell}({\bf q})=\sum_{j=1}^d \delta(q_j-\chi_{\ell})$.  
Each entry of the type vector has an associated support vector $s_{\ell}$ denoting the positions in $\bf q$ that are equal to $\chi_{\ell}$, i.e. $s_{\ell}[j] = 1$ if $q_j = \chi_{\ell}$ and $s_{\ell}[j] = 0$ otherwise. 
Because $\sum_{\ell=0}^{L-1} s_{\ell}[j] = 1,\forall 1\leq j\leq d$, for a support vector $s_{\ell}$, 
all subsequent support vectors for levels $\ell' > \ell$ have $s_{\ell'}[j] = 0$. Thus, these subsequent support vectors do not need to encode positional information for the values $\mathcal{I}_{\ell}$, the set of indices where $s_{\ell}[i] = 1$. 
Let $s_{\ell}[\mathcal{I}_{\ell}]$ denote the indices within $s_{\ell}$ that are communicated. It follows from \cite{woldemariam2023low} that $\mathcal{I}_{\ell} = \mathcal{I}_0 \setminus \bigcup_{\ell' < \ell} \mathcal{I}_{\ell'}$ where $\mathcal{I}_0 = \{1, \dots, d\}$. The run-lengths within the support vectors $s_{\ell}[\mathcal{I}_{\ell}]$ are then encoded with Golomb encoding\footnote{For further details see Appendix \ref{app:encode}.}. 
We demonstrate, via Theorem \ref{th:bits}, that this encoding scheme combined with our sparse inputs yields a superior bit utilization rate when compared to the state of the art. This is largely because QSGD does not similarly utilize sparsity when designing their compression scheme.
As the training converges, we expect a diminishing residual to be quantized in Step 4 of Alg. \ref{alg:quant}, resulting in a sparse quantized vector.

Let $\phi^t(\cdot)$ be the probability mass function (PMF) of the quantization output ${\bf q}_i^{(t)}$ at iteration $t$. It follows that the type vector tends to $d\phi^t(\cdot)$ for $d \to \infty$, i.e.  $\frac{1}{d} {\bm t(\bf q)} \to \phi^t(\cdot)$.  Let $\phi^t(\cdot)$ be the permutation of $f^t(\cdot)$ that ensures $f^t(\cdot)$ is in nonincreasing order. The re-ordered frequency  $f^t(\cdot)$ satisfies $f_{\ell}^t \geq f_{\ell+1}^t,\forall \ell$ for the $\ell$-th quantization level with $0\leq\ell\leq L-1$.
The quantization mapping 
described in Section \ref{sec:alg_intro} scales an input vector according to its range, essentially shrinking the support of $f^t(\cdot)$ as $t$ grows, given that the precision is fixed. %
In equation (11) of \cite{woldemariam2023low}, it is shown that the bit length of encoding ${\bf q}_i^{(t)}$ is upper bounded by
\vspace{-0.2cm}
\begin{align}\label{eq8}
   d\cdot R(L) \leq d\Bigg(H(f^t) + 2.914(1 - f_0^t) + f_0^t\log_2f_0^t +\sum_{\ell=1}^{L-1}f_{\ell}^t\log_2(1 - \sum_{m=0}^{\ell -1}f_m^t)\Bigg),
\end{align}
where $f_0$ is the PMF associated with the most frequent quantization level, $H(f^t)$ is the entropy of $f^t$, $d$ is the size of the model, and $R(L)$ is the average bit rate. Recall that the entropy of $f^t$ is defined as $-\sum_{\ell=0}^{L -1} f^t_{\ell} \log_2 f^t_{\ell}$.
By computing the empirical PMF $f^t$ in each iteration, (\ref{eq8}) provides a formula for the required number of bits in each local model communication. As large values rarely occur due to the regularization, this method is particularly effective in harnessing compression gains. 
 
As in \cite{woldemariam2023low}, encoding the run-lengths with Golomb coding has a complexity of $\mathcal{O}(\log L)$, so there is a total complexity of $\mathcal{O}(d\log L)$. For comparison, \cite{qsgd} also has a total complexity of $\mathcal{O}(d \log L)$.

\textbf{Analysis Under Laplace Residual Model}.
The computation of communication costs using \eqref{eq8} necessitates an understanding of the prior distribution of the quantization inputs, \ie the local model residuals $\vp^{(t)} = \{\vx^{(t+\half)}_i-\yv^{(t-1)}_i\}$, which
is often unavailable prior to the training process. 
However, the $\ell_1$-regularization in \eqref{eq:obj_func} promotes sparse local models, and \cite{lap_residule} reported that the residual can be modeled by an i.i.d. zero-mean Laplace distribution characterized by the time-varying diversity parameter $\rho^{(t)}$. The sparser the model, the smaller $\rho^{(t)}$ is. Below is the main result of this analysis. 
%
\begin{theorem}
\label{th:bits}
Suppose \ourAlg's residuals $\vp^{(t)}$ follow a zero-mean Laplace distribution with parameter $\rho^{(t)}$. Let the range of the quantizer adapt to the mean of the distribution as $r^{(t)}\propto \rho^{(t)} 2(\ln(d)-\ln(\epsilon))$, since with probability $1-\epsilon$ for small $\epsilon\in[0,1]$ the adaptive range $\max(\bm p^{(t)})-\min(\bm p^{(t)})$ is bounded by this value. 
Then, the number of bits, $R(L)$, used in one round of model communication for any pair of nodes at iteration $t$ is such that:
\begin{align}
R(L) \leq& 2.914\exp{\left(-\frac{r^{(t)}}{\rho^{(t)}2L}\right)}\label{eq:final_bits}+2\sinh\left(\frac{r^{(t)}}{\rho^{(t)}2L}\right)\frac{1-\log \left(1-\exp{\left(-\frac{r^{(t)}}{\rho^{(t)}L}\right)}\right)}{\exp{\left(\frac{r^{(t)}}{\rho^{(t)} L}-1\right)}}.
\end{align} Since $r^{(t)}\propto 2(\ln(d)-\ln(\epsilon))$ for large $d$ and $L$, \eqref{eq:final_bits} implies that the bit rate grows at a rate of 
\vspace{-0.1cm}
\begin{align}
    \mathcal{O}\left(d\exp{\left(-\frac{1}{L}\right)}\right) \label{eq:complex_bit}.
\end{align}
\vspace{-0.2cm}
\end{theorem}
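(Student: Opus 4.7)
The plan is to specialize the entropy-based rate bound \eqref{eq8} of \cite{woldemariam2023low} to the Laplace residual model, evaluate each of its terms in closed form, and then substitute the assumed scaling of the adaptive range.

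\textbf{Step 1 --- justify the adaptive range.} Since the coordinates of $\vp^{(t)}$ are i.i.d.\ zero-mean Laplace with parameter $\rho^{(t)}$, a union bound yields $\mathbb{P}\bigl(\max_i |p_i^{(t)}|>a\bigr)\leq d\exp(-a/\rho^{(t)})$. Setting the right-hand side equal to $\epsilon$ gives $\max_i |p_i^{(t)}|\leq \rho^{(t)}(\ln d-\ln\epsilon)$ with probability at least $1-\epsilon$, so by symmetry of the Laplace density $r^{(t)}=\max(\vp^{(t)})-\min(\vp^{(t)})\leq 2\rho^{(t)}(\ln d-\ln\epsilon)$ on this event, which is exactly the scaling assumed in the theorem.

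\textbf{Step 2 --- compute the quantized PMF.} Writing $\Delta:=r^{(t)}/L$ for the bin width and integrating the Laplace density over each bin, the zero-centered bin has mass $1-\exp(-\Delta/(2\rho^{(t)}))$, while the bin at distance $k\Delta$ from the origin has mass $\exp(-k\Delta/\rho^{(t)})\sinh(\Delta/(2\rho^{(t)}))$ (with the symmetric pair at $\pm k$ having identical mass). After sorting nonincreasingly, this produces $f_0^t=1-\exp(-r^{(t)}/(2L\rho^{(t)}))$ and a geometric tail with ratio $\exp(-\Delta/\rho^{(t)})$.

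\textbf{Step 3 --- substitute into \eqref{eq8}.} The term $2.914(1-f_0^t)$ collapses directly to $2.914\exp(-r^{(t)}/(2L\rho^{(t)}))$, matching the first summand of \eqref{eq:final_bits}. For the remaining terms $H(f^t)+f_0^t\log_2 f_0^t+\sum_{\ell\geq 1} f_\ell^t\log_2(1-\sum_{m<\ell}f_m^t)$, observe that $1-\sum_{m<\ell} f_m^t$ is exactly a geometric tail proportional to $\exp(-(\ell-1)\Delta/\rho^{(t)})$. Factor out the common $\sinh(\Delta/(2\rho^{(t)}))$ arising from the symmetric-pair structure, sum the resulting geometric series into $1/(\exp(\Delta/\rho^{(t)})-1)$, and bound the nested logarithm by $1-\log(1-\exp(-\Delta/\rho^{(t)}))$; together these reproduce the second summand of \eqref{eq:final_bits}.

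\textbf{Step 4 --- asymptotic analysis.} Substituting $r^{(t)}/\rho^{(t)}=2(\ln d-\ln\epsilon)$ gives $\exp(-r^{(t)}/(2L\rho^{(t)}))=(d/\epsilon)^{-1/L}$, and the second summand decays at the same exponential rate up to slowly varying factors. Multiplying the per-symbol rate $R(L)$ by the model dimension $d$ yields total bits of order $d\cdot d^{-1/L}=d\exp(-\ln(d)/L)$, which is captured (absorbing $\ln d$ into the hidden constant for fixed $L$) by the claimed $\mathcal{O}(d\exp(-1/L))$ and strictly improves on the $\mathcal{O}(d\log L)$ rate of QSGD. The main obstacle is the algebraic collapse in Step 3: one must simultaneously (i) recognize $1-\sum_{m<\ell} f_m^t$ as a geometric sum, (ii) pair the symmetric bins to produce the $\sinh$ prefactor, and (iii) bound the nested logarithm cleanly (e.g.\ via monotonicity of the partial sums). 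A secondary subtlety is that $r^{(t)}$ is itself random through the empirical extrema, so the inequality holds on the high-probability event from Step 1, and the bound is therefore a with-probability-$(1-\epsilon)$ statement rather than a deterministic one.
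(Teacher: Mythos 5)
Your proposal is correct and follows essentially the same route as the paper's proof in Appendix \ref{proof:bits_num}: the Laplace tail/union bound to justify the deterministic range $r^{(t)}\propto 2\rho^{(t)}\ln(d/\epsilon)$, the closed-form quantized PMF with $f_0=1-e^{-\Delta}$ and a geometric tail carrying a $\sinh$ prefactor from the symmetric bin pairs, substitution into \eqref{eq8} with a geometric-series summation and the same bound on the nested logarithm, and the same final substitution for the asymptotics. You also correctly flag, as the paper does implicitly, that the result holds on the probability-$(1-\epsilon)$ event where the empirical range is controlled.
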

For the proof we refer to Appendix \ref{proof:bits_num} 
Theorem \ref{th:bits} suggests a significant improvement over the state of the art $\mathcal{O}(d\log L)$ \cite{qsgd}. Not only does $\mathcal{O}(d\exp({-1/L}))$ grow at a slower rate, it approaches a constant factor. This implies that as we increase the precision, our bit utilization approaches some constant. Furthermore, because of the adaptive range, after enough iterations have passed \ourAlg begins using a nearly constant number of bits per iteration. Finally, we note that the advantage obtained from decreasing $L$ must be balanced against the degradation in convergence speed because a smaller $L$ also results in higher quantization error and thus slower convergence, as shown in Section \ref{sec:analysis}. 

\section{Numerical Results}
\label{sec:numerical}

In this section, we evaluate the performance of \ourAlg through simulations of decentralized learning tasks on image classification using the MNIST \citep{mnist} and CIFAR10 \citep{cifar} datasets. We compare \ourAlg with \textsc{Choco-SGD} \citep{choco} (paired with the compression scheme of \textsc{QSGD} given in \citep{qsgd}), \textsc{Choco-SGD}, paired with the compression scheme utilized by \ourAlg, and CDProxSGT \citep{sparsesgt}, paired with the \ourAlg compression scheme. We tested \textsc{Choco-SGD} with both our compression and \textsc{QSGD} so that the effect of the $\ell_1$ norm can be analyzed independently. For each of these compression schemes we fixed the precision level, i.e., the number of quantization levels over the interval $[0,1]$.   
We run training over a fully connected (FC.) and ``ring-like"\footnote{For details of the ``ring" like topology see Appendix \ref{app:add_exp} Fig. \ref{fig:ringtop}} network topology with 10 nodes each having an i.i.d distribution of data. ResNet18 \citep{he2016deep} was trained on CIFAR10 while a three-layer fully connected NN was trained on MNIST. The details of the experimental setup (including the hyper-parameters) can be found in Appendix \ref{app:add_exp}.   
\begin{table}[ht!]
\centering
    \begin{tabularx}{1.0\textwidth}{*{2}{>{\centering\arraybackslash}X}}
    \begin{tabular}{l|cccc}
        \hline
         & { \tiny CIFAR-Ring} & {\tiny MNIST-Ring}& {\tiny CIFAR-FC.}& {\tiny MNIST-FC.}\\
         \hline
        {\small Algorithms}  & \multicolumn{4}{c}{{\small Final Cumulative  bits (GB)}} \\ 
        \hline
        {\small Malcom-PSGD}  & {\footnotesize \textbf{589.36}} & {\footnotesize \textbf{626.52}} & {\footnotesize \textbf{481.98}}& {\footnotesize \textbf{144.34}}\\
    
        {\small Error-Free}   & {\footnotesize 10799.82} & {\footnotesize 13417.97} & {\footnotesize 8945.31} & {\footnotesize 1050.31}\\
    
        {\small Choco+QSGD}  & {\footnotesize 3174.91} & {\footnotesize 3172.22} & {\footnotesize 2107.91} & {\footnotesize 573.08}\\
    
        {\small Choco+OurComp}  & {\footnotesize 707.58} & {\footnotesize 796.30} & {\footnotesize 522.62} & {\footnotesize 149.62}\\
    
        {\small CDProxSGT}  & {\footnotesize 988.87} & {\footnotesize 857.94} & {\footnotesize 811.42} & {\footnotesize 283.75}\\
        \hline
        \end{tabular}
      & \hspace{2.2cm}\includegraphics[width=.3\textwidth, valign=c]{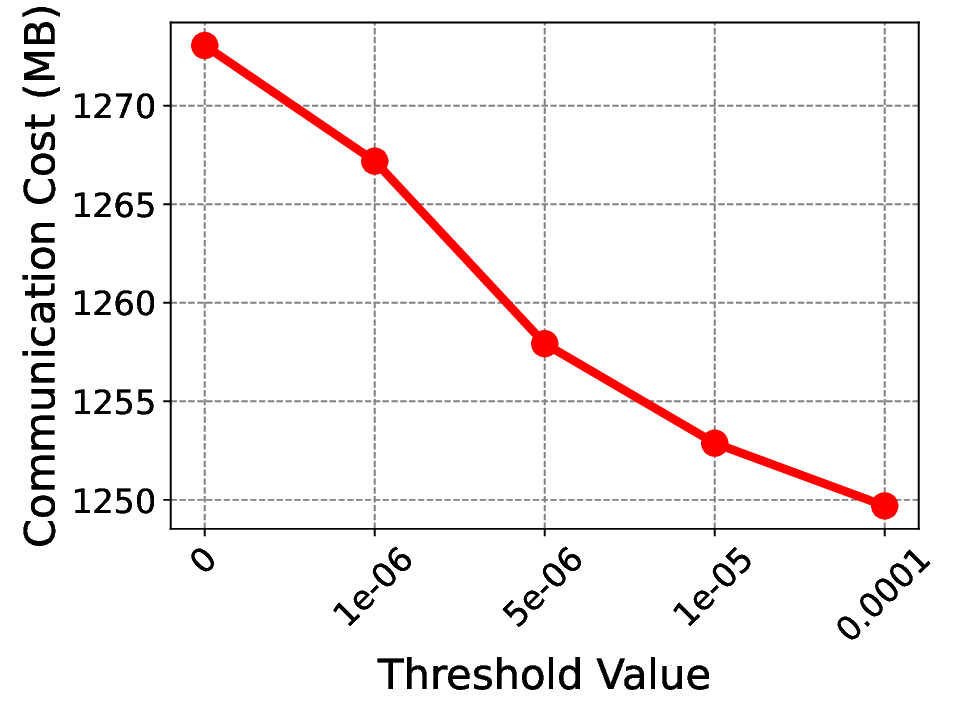}\label{fig:thres} \\
      \caption{Contains the cumulative number of bits that each algorithm used up to the cut-off as described in Section \ref{sec:numerical}.}\label{fig:table}
      & \captionof{figure}{\ourAlg was validated on the CIFAR10 dataset using the ring topology with varying threshold parameters.}
      \end{tabularx}
\end{table}

Figure \ref{fig:ring} and Figure \ref{fig:fully_con} showcase the testing accuracy and training loss of the different algorithms for the different datasets over the ring and fully connected topologies respectively. Notice in each of the accuracy plots there is a dashed horizontal line, denoting the accuracy cut-off, and each curve has a marker corresponding to when that curve crosses the accuracy cut-off, or the final point if it failed to cross the cut-off. The cumulative number of bits used to reach these points is reported in Table \ref{fig:table}. Figure \ref{fig:table} shows the cumulative number of bits utilized by \ourAlg over varying threshold ($\mu$) values. 

\textbf{\ourAlg vs. \textsc{Choco-SGD}.} We consider \textsc{Choco-SGD} combined with QSGD to be the state of the art. \ourAlg sees over a 75\% improvement in bit utilization when compared to \textsc{Choco-SGD}+QSGD. This verifies the analytical claim in Section \ref{sec:bit_analysis} that our compression scheme has a more efficient bit utilization. Furthermore, while both \textsc{Choco-SGD} and \ourAlg have similar accuracy versus iterations curves, \ourAlg tended to have a slightly faster convergence. This is likely due to the fact that sparse models are more stable under the quantization scheme and thus converge faster.      

\textbf{CDProxSGT vs. \ourAlg.} As shown in \cite{sparsesgt,NEURIPS2021_5f25fbe1,pu2021distributed} SGT-based methods require less training and communication iterations than the typical SGD methods. However, the accelerated convergence does not necessarily result in a total bandwidth or bit reduction as demonstrated in Table \ref{fig:table}. CDProxSGT often converges 4000 iterations before \ourAlg, but uses more bits. CDProxSGT, utilizing SGT, requires 2 communication steps at every iteration. This at minimum doubles the number of bits required. Finally, CDProxSGT compresses both the model residuals and the ``tracking" residual. While we have shown in Section \ref{sec:bit_analysis} that the compression scheme is advantageous for model residuals, there is currently no theory on characterizing the evolution of the ``tracking" residual.          
\begin{figure}
    \centering
    \begin{subfigure}
        \centering
        \includegraphics[width=.23\textwidth]{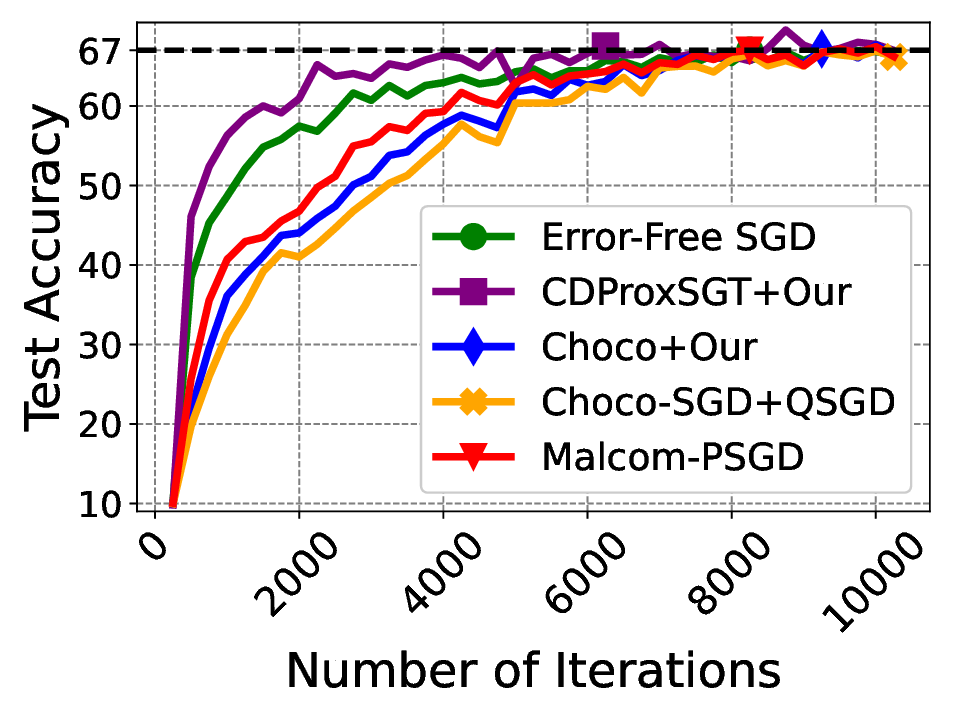}
    \end{subfigure}
    \begin{subfigure}
        \centering
        \includegraphics[width=.23\textwidth]{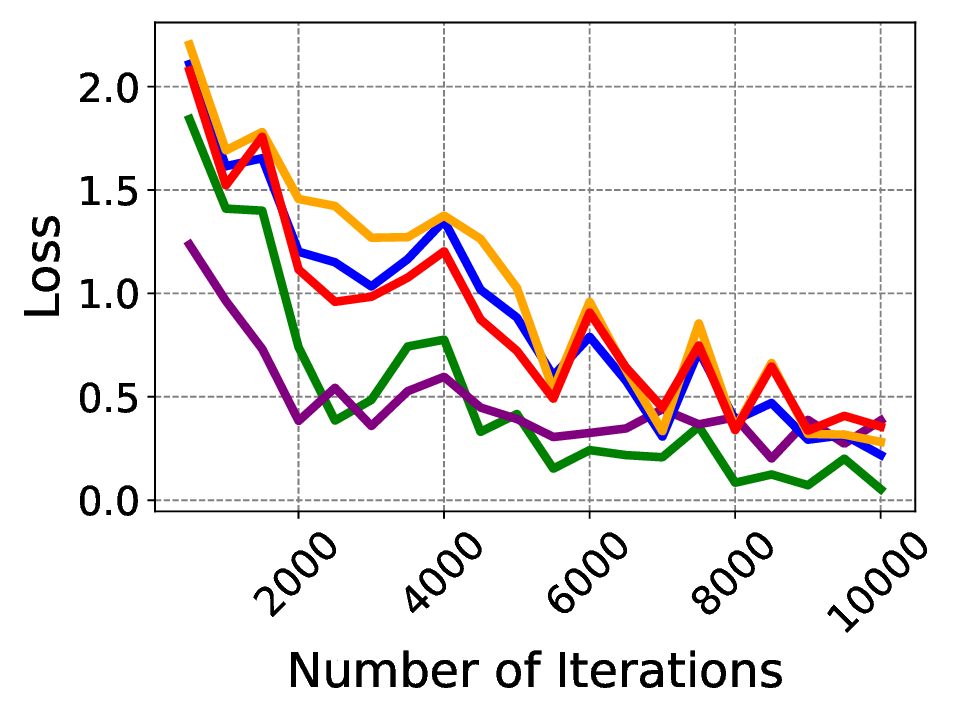}
    \end{subfigure}
        \begin{subfigure}
        \centering
    \end{subfigure}
    \begin{subfigure}
        \centering
        \includegraphics[width=.23\textwidth]{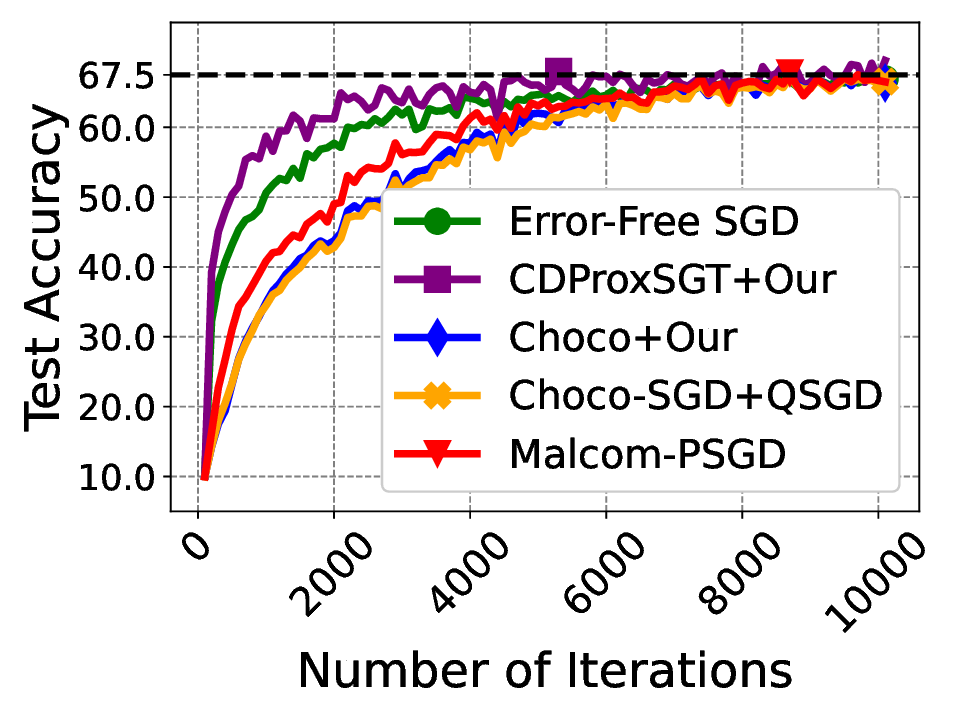}
    \end{subfigure}
    \begin{subfigure}
        \centering
        \includegraphics[width=.23\textwidth]{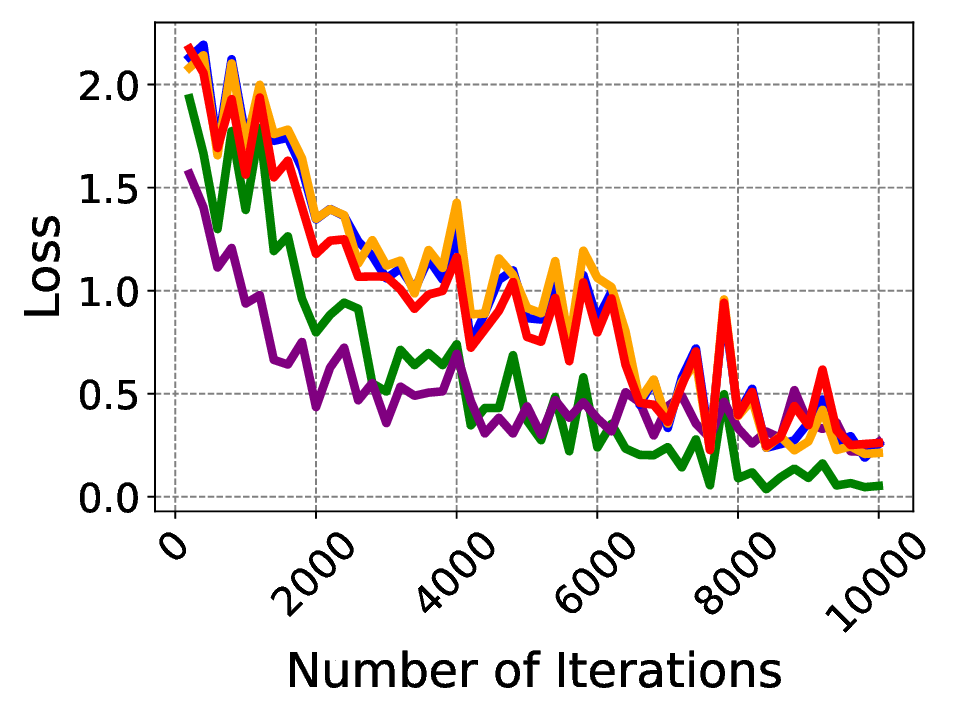}
    \end{subfigure}
    \caption{All plots are over the ``ring" topology. The top row corresponds to the ResNet setup over the CIFAR10 dataset while the bottom corresponds to the 3-layer FC. NN over MNIST. The left column contains the accuracy plots while the right contains the loss plots. The horizontal black dashed line refers to the accuracy cut off and the diamond-colored points are where the bits were sampled from. For the CIFAR10 setup there is an accuracy cut-off of 67.0 while the MNIST setup has a cut-off at 67.5.}
    \label{fig:ring}

\end{figure}

\begin{figure}
    
    \centering
    \begin{subfigure}
        \centering
        \includegraphics[width=.23\textwidth]{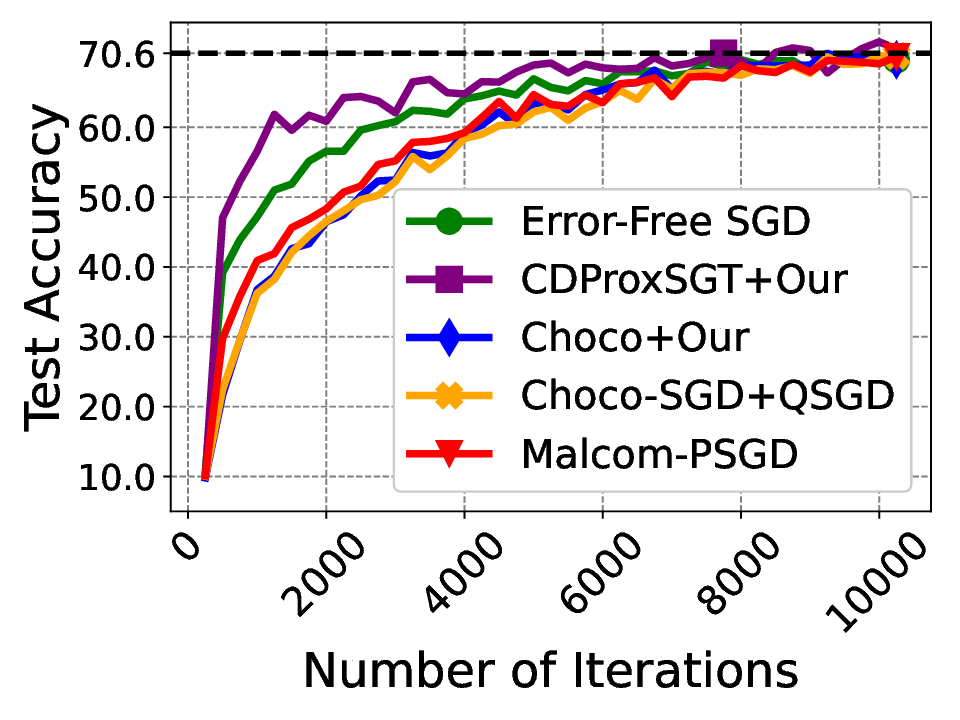}
    \end{subfigure}
    \begin{subfigure}
        \centering
        \includegraphics[width=.23\textwidth]{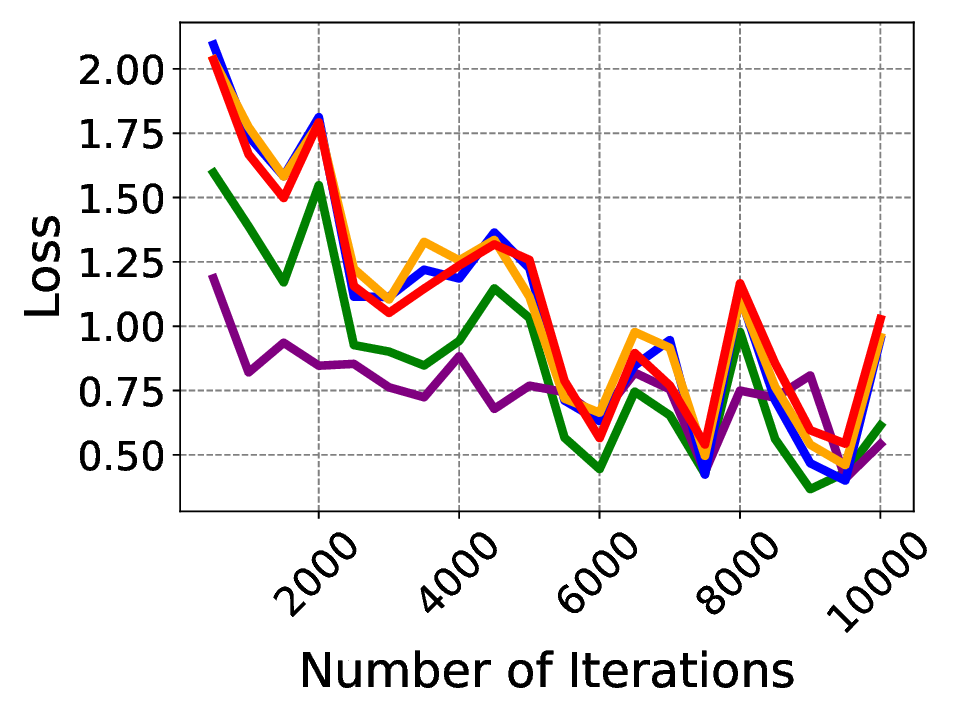}
    \end{subfigure}
    \begin{subfigure}
        \centering
        \includegraphics[width=.23\textwidth]{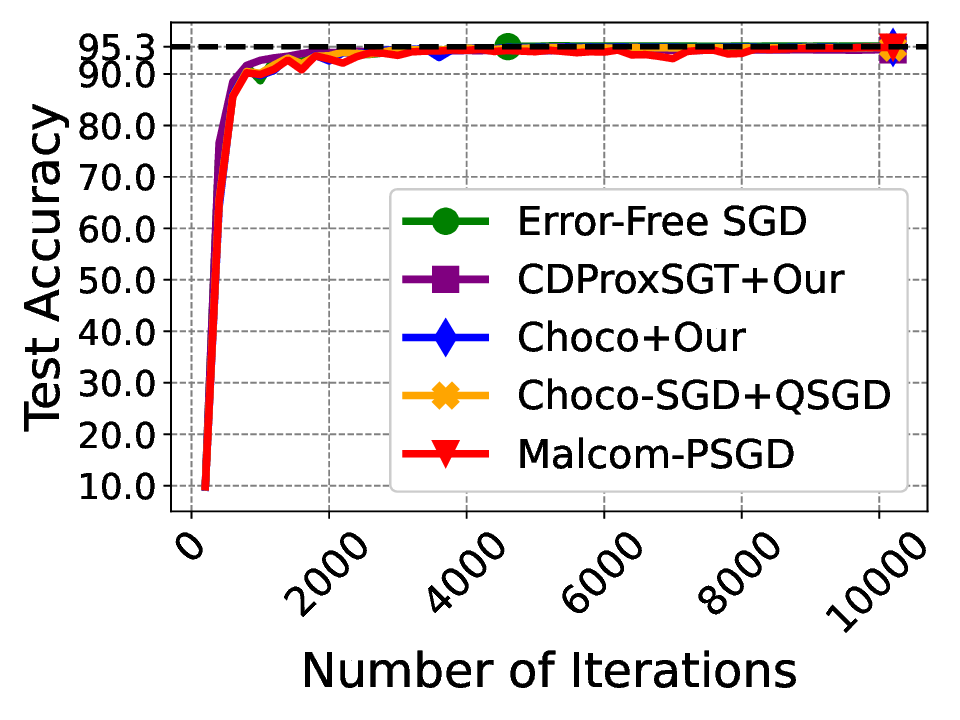}
    \end{subfigure}
    \begin{subfigure}
        \centering
        \includegraphics[width=.23\textwidth]{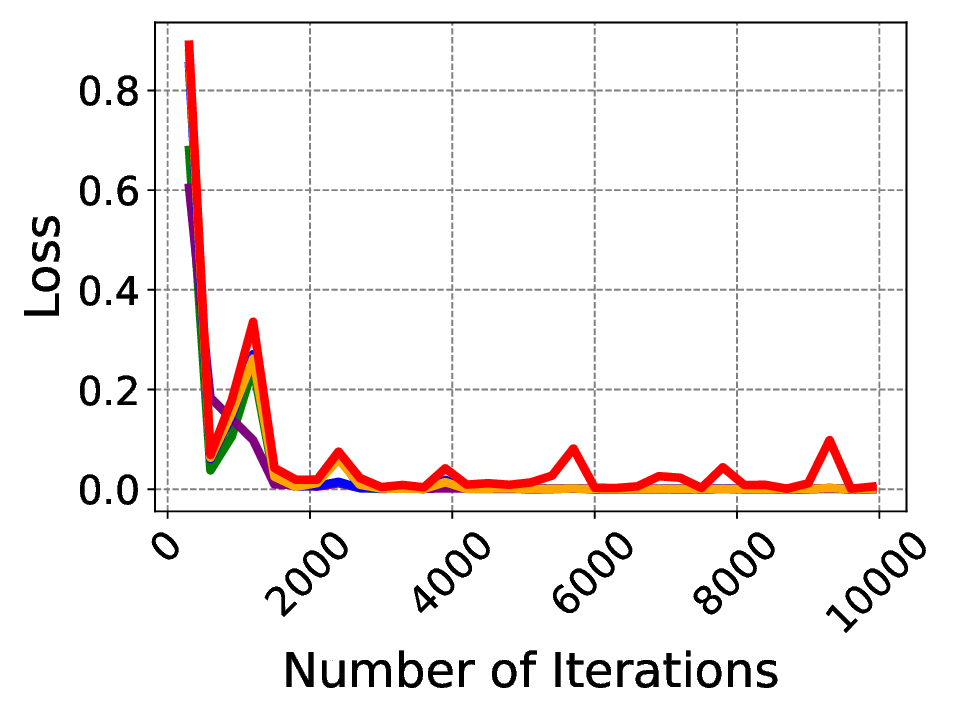}
    \end{subfigure}
    \caption{All plots are over the fully connected topology. The top row corresponds to the ResNet setup over the CIFAR10 dataset while the bottom corresponds to the 3-layer FC. NN over MNIST. The left column contains the accuracy plots while the right contains the loss plots. The horizontal black dashed line refers to the accuracy cut off and the diamond-colored points are where the bits were sampled from. For the CIFAR10 setup, there is an accuracy cut-off of 70.6, while the MNIST setup has a cut-off at 95.3.}
    \label{fig:fully_con}

\end{figure}
\textbf{The effects of Thresholding.} Figure \ref{fig:table} indicates that increasing the level of thersholding decreases the number of bits. While this matches our analysis in Section \ref{sec:bit_analysis}, the thresholding parameter is similar to other hyper parameters in that it must be tuned. Excessive thresholding reduces accuracy while too little does not provide stability under quantization nor a significant bit reduction. However, once an optimal threshold is found accuracy can be maintained while improving bit efficiency. This is evident from the figures and tables since when \textsc{Choco-SGD} was giving our compression method, \ourAlg used less bits for equivalent accuracy.      

\section{Conclusion}
\label{sec:conclusion}

We introduced the \ourAlg algorithm for decentralized learning with finite-sum, smooth, and non-convex loss functions. Our approach sparsifies local models by non-smooth $\ell_1$ regularization, rendering the implementation of the conventional SGD-based methods challenging. To address this challenge, we adopted the decentralized proximal SGD method to minimize the regularized loss, where the residuals of local SGD and proximal updates are computed prior to aggregation. Furthermore, we employed dithering-based gradient quantization and vector source coding schemes to compress model communication and leverage the low entropy of the updates to reduce the communication cost. By characterizing data sub-sampling and compression errors as perturbations in the proximal operation, we quantified the impact of gradient compression on training performance and established the convergence rate of \ourAlg with constant stepsizes. Moreover, we analyzed the communication cost in terms of the asymptotic code rate for the proposed algorithm. Numerical results validate the theoretical findings and demonstrate the improvement of our method in both learning performance and communication efficiency.

        

\bibliographystyle{iclr2023_conference}
\bibliography{example_paper}


\appendix

\section{Appendix}
\subsection{The Quantization Scheme in (\ref{eq:q(x)}) Satisfies Lemma \ref{lem:quant}}
\label{proof:quant} Let $\zeta(p,L)=\frac{1}{L}\lfloor pL+u\rfloor$ where $u$ uniformly drawn from [0,1]. Furthermore,
in the following section all expectations are taken with respect to the random dither $u$, for a given $\bm p$.
We denote the effective quantization scheme combining (\ref{eq:q(x)}) and the de-normalization procedure at the receiver end by $\hat{Q}(\cdot)$. Specifically, for any input vector $\pv\in\mathbb{R}^d$, the $i$-th entry of the output is given by
 $  \hat{Q}(p_i)=(\max{(\vp)}-\min{(\vp)})Q(p_i)+\min{(\vp)}$.
Let  $\hat{p}_i=\dfrac{p_i-\min(\vp)}{ \max(\vp) -\min(\vp)}$, $r_\vp= \max(\vp) -\min(\vp) $, and $m= \min(\vp)$. 
We have
$$\hat{Q}(p_i)=\dfrac{1}{\tau}\left(\zeta\left(\hat{p}_i, L\right)r_{\vp}+m\right).$$

 From \citep[Lemma 3.1]{qsgd}, the operator $\zeta(\cdot)$ is unbiased, i.e., $\E[\zeta(\hat{p}_i)]=\hat{p}_i$, where the expectation is taken w.r.t. the uniform dithering process. Moreover, let $l$ be the quantization region of $x_i$ such that $\hat{p}_i\in[l/L,(l+1/L)]$. Let $\mathcal{P}_i$ denote the probability that $\hat{p}_i$ is mapped to $(l+1)/L$, where we have $\mathcal{P}_i=\hat{p}_iL-l$.
In other words, we have $\zeta(\hat{p}_i,L)$ equal to $(l+1)/L$ with probability $\mathcal{P}_i$ and equal to $(l+1)/L$ with probability $1-\mathcal{P}_i$.
Therefore, we have
 \begin{align*}
     \E[\zeta(\hat{p}_i,L)^2]
     & =\E[\zeta(\hat{p}_i,L)]^2+\Var(\zeta(\hat{p}_i,L))=\hat{p}^{2}_{i}+\dfrac{\mathcal{P}(1-\mathcal{P})}{L^2}\leq \hat{p}^{2}_{i}+\dfrac{1}{4L^2},
 \end{align*}
where $\Var(\cdot)$ denotes the variance of the input random variable, and the last inequality follows from $\mathcal{P}(1-\mathcal{P})\leq 1/4$.

 Note that
 \begin{align}
     \E[\|\hat{Q}(\vx)\|^2]&=\frac{1}{\tau^2}\E\left[\sum^d_{i=1}\left(r_{\vp}\zeta(\hat{p}_i,L)+m\right)^2\right]\nonumber\\
    &=\frac{1}{\tau^2}\left(\sum^d_{i=1}\E\left[r^{2}_{\vp}\zeta(\hat{p}_i,L)^2\right]+\sum^d_{i=1}\E\left[2r_{\vp}m\zeta(\hat{p}_i,L)\right]+\sum^d_{i=1}m^2\right) \nonumber\\
     &\leq \frac{1}{\tau^2}\left(r^{2}_{\vp}\sum^d_{i=1}\left(\hat{p}^{2}_{i}+\dfrac{1}{4L^2}\right) +2r_{\vp}m\sum^d_{i=1}\hat{p}_i+dm^2 \right).\label{qx_pre_f_sub}
 \end{align}
For $\hat{p}$, we have
\begin{align}
    \sum^d_{i=1}\hat{p}_i&=\dfrac{1}{r_{\vp}}(-dm+\sum^d_{i=1}p_i), \label{f_sum}\\
    \sum^d_{i=1}\hat{p}^{2}_{i}
    &=\dfrac{1}{r^{2}_{\vp}}\sum^d_{i=1}p_i^2 -\dfrac{2m}{r^{2}_{\vp}}\sum^d_{i=1}p_i+\dfrac{dm^2}{r^{2}_{\vp}} .\label{f_sqr_sum}
\end{align}
Substituting (\ref{f_sum}) and (\ref{f_sqr_sum}) into \eqref{qx_pre_f_sub} we have:
\begin{align*}
    \E[\|\hat{Q}(\vp)\|^2]
    &=\frac{1}{\tau^2}\left(\|\vp\|_2^2+r^{2}_{\vp}\dfrac{d}{4L^2}\right)\overset{(a)}{\leq} \frac{1}{\tau^2}\left(1+\dfrac{d}{L^2}\right)\|\vp\|_2^2\overset{(a)}{=} \frac{\norm{\pv}_2^2}{\tau},
\end{align*}
where $(a)$ follows from $r^{2}_{\vp}\leq4\|\vp\|_\infty^2\leq4\|\vp\|_2^2$ and $(b)$ follows from the definition of $\tau$.
Finally, we have
 \begin{align*}
     \E\left[\|\hat{Q}(\vp)-\vp\|^2\right]
     &=\E\left[\sum^d_{i=1}\hat{Q}(p_i)^2\right]-2\E\left[\sum^d_{i=1}\hat{Q}(p_i)p_i\right]+\E\left[\sum^d_{i=1}p_i^2\right]\\
     &\leq \dfrac{1}{\tau}\|\vp\|^2-\dfrac{2}{\tau}\|\vp\|^2+\|\vp\|^2= \left(1-\dfrac{1}{\tau}\right)\|\vp\|^2,
 \end{align*}
 which completes the proof.
 \subsection{Matrix Representation of \ourAlg and Useful Lemmas}
To facilitate our proofs, we recast \ourAlg into an equivalent matrix form. Specifically, we define $\Yv^{(t)}=[\yv_1^{(t)},\cdots,\yv_n^{(t)}]$, $\mZ^{(t)}=[\vz_1^{(t)},\cdots,\vz_n^{(t)}]$, $\mQ^{(t)}=[\vq_1^{(t)},\cdots,\vq_n^{(t)}]$, ${\bf \xi}^{(t)}=[\xi_1^{(t)},\cdots,\xi_n^{(t)}]^T$ ,
and $\nabla F(\Xv^{(t)},\xi^{(t)})=[\nabla F_i(\xv_1^{(t)},\xi_1^{(t)}),\cdots,\nabla F_n(\xv_n^{(t)},\xi_n^{(t)})]$. Together with the matrices $\mX^{(t)}$ and $\overline \mX^{(t)}$ defined in Section \ref{sec:analysis}, we have the matrix form of Alg. \ref{alg:quant}, as shown in Alg. \ref{alg:quantmatrix}.

We also provide two useful results for the proof.
\begin{lemma}For any ${\bf A},{\bf B}$ with the same dimension, we have, for any $\alpha>0$:
		\begin{align}
			&\lVert{{\bf A}+{\bf B}}\rVert_F^2\leq (1+\alpha)\lVert{{\bf A}}\rVert_F^2+(1+\alpha^{-1})\lVert{{\bf B}}\rVert_F^2,\label{eq:a_plus_b}\\
    & \|\mA\mB\|_F \leq\|\mA\|_F\|\mB\|_2. \label{eq:a_times_b}
\end{align}
  \end{lemma}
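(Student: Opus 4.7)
The plan is to handle the two inequalities separately, since they rely on different ingredients: the first is a Young-type inequality applied to the Frobenius inner product, while the second exploits the compatibility of the Frobenius norm with the operator 2-norm via row-wise decomposition.

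For (\ref{eq:a_plus_b}), I would start by expanding the squared Frobenius norm using the inner product $\langle \mA,\mB\rangle_F = \Tr(\mA^\top \mB)$:
\begin{align*}
\|\mA+\mB\|_F^2 = \|\mA\|_F^2 + 2\langle \mA,\mB\rangle_F + \|\mB\|_F^2.
\end{align*}
Applying Cauchy--Schwarz in the Frobenius inner product gives $|\langle \mA,\mB\rangle_F| \leq \|\mA\|_F\|\mB\|_F$. Then the elementary scalar inequality $2xy \leq \alpha x^2 + \alpha^{-1} y^2$ for $\alpha>0$ (Young's inequality, which follows from $(\sqrt{\alpha}x - y/\sqrt{\alpha})^2 \geq 0$) applied with $x=\|\mA\|_F$ and $y=\|\mB\|_F$ yields $2\langle \mA,\mB\rangle_F \leq \alpha\|\mA\|_F^2 + \alpha^{-1}\|\mB\|_F^2$, and substituting back gives the claim.

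For (\ref{eq:a_times_b}), I would decompose $\mA\mB$ row by row. Let $\va_i^\top$ denote the $i$-th row of $\mA$, so that the $i$-th row of $\mA\mB$ is $\va_i^\top \mB$. Then
\begin{align*}
\|\mA\mB\|_F^2 = \sum_i \|\va_i^\top \mB\|_2^2 = \sum_i \|\mB^\top \va_i\|_2^2 \leq \sum_i \|\mB^\top\|_2^2 \,\|\va_i\|_2^2 = \|\mB\|_2^2 \sum_i \|\va_i\|_2^2 = \|\mB\|_2^2\,\|\mA\|_F^2,
\end{align*}
where the inequality uses the definition of the operator 2-norm and the identity $\|\mB^\top\|_2 = \|\mB\|_2$. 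Taking square roots yields the desired bound.

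Neither step presents a real obstacle; both are standard. The only thing worth being careful about is the direction of the submultiplicativity in the second inequality: the bound pairs $\|\cdot\|_F$ with $\|\cdot\|_2$ in a specific order, and the row-wise decomposition (as opposed to a column-wise one, which would give the symmetric bound $\|\mA\|_2\|\mB\|_F$) is what produces the form stated in (\ref{eq:a_times_b}).
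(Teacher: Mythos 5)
Your proof is correct and complete; the paper itself states this lemma without proof (it is treated as a standard fact), and your argument—Young's inequality applied to the Frobenius inner product for \eqref{eq:a_plus_b}, and a row-wise decomposition with the operator-norm bound for \eqref{eq:a_times_b}—is exactly the standard derivation one would supply. Your closing remark about the asymmetry of the mixed-norm bound (row-wise versus column-wise decomposition determining which factor carries the Frobenius norm) is a correct and worthwhile observation, since the paper applies \eqref{eq:a_times_b} with the mixing-matrix factor on the right in the consensus lemma.
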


\begin{lemma}
   For the mixing matrix $\mW$ satisfying Assumption \ref{as:mixMatrix}, we have, for any $k>0$,
\begin{align}
    & \left\|\mW^k-\frac{1}{n} {\bf 1}{\bf 1}^{\top}\right\|_2 \leq |\lambda_2|^k=(1-\delta)^k, \label{eq:w_minus_avg} \\
    & \left\|\Iv-\Wv\right\|_2^2=\lambda_n^2 \label{eq:i_minus_W},\\
    &\norm{\mI-\gamma\paren{\mW-\mI}}_2=1+\gamma\paren{1-\max_i\left\{\lambda_i\paren{\mW}\right\}},\label{eq:spect_other}
\end{align}
where we define $\delta:=1-|\lambda_2|$.
\end{lemma}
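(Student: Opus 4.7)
The three claims are all spectral statements about the symmetric matrix $\mW$, so the unifying plan is to diagonalize $\mW$ once and read off each identity from the eigenvalues. By Assumption \ref{as:mixMatrix} (ii), $\mW$ is symmetric, so it admits an orthonormal eigendecomposition $\mW=\mU\mathrm{diag}(\lambda_1,\ldots,\lambda_n)\mU^\top$ with $\mU=[u_1,\ldots,u_n]$. By items (iii) and (iv) (doubly stochastic and irreducible) and Perron--Frobenius, $\lambda_1=1$ is simple with eigenvector $u_1=\mathbf 1/\sqrt n$, and $|\lambda_i|<1$ for $i\geq 2$; this is what justifies the ordering $1=|\lambda_1|>|\lambda_2|\geq\cdots\geq|\lambda_n|$ used in the excerpt.

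For \eqref{eq:w_minus_avg}, I would observe that $\tfrac{1}{n}\mathbf 1\mathbf 1^\top = u_1u_1^\top$ is the orthogonal projector onto the top eigenspace, so
\begin{align*}
\mW^k-\tfrac{1}{n}\mathbf 1\mathbf 1^\top=\sum_{i=2}^n\lambda_i^k\,u_iu_i^\top.
\end{align*}
This is symmetric with eigenvalues $\{\lambda_i^k\}_{i\geq 2}$ (and a zero eigenvalue on $\mathrm{span}(u_1)$), so its spectral norm is $\max_{i\geq 2}|\lambda_i|^k=|\lambda_2|^k=(1-\delta)^k$ by definition of $\delta$. The inequality in the statement then becomes an equality at this step.

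For \eqref{eq:i_minus_W} and \eqref{eq:spect_other}, both matrices are polynomials in $\mW$, so I would use the functional calculus on the same eigenbasis. The eigenvalues of $\mI-\mW$ are $\{1-\lambda_i\}$, which are nonnegative (since $\lambda_i\leq 1$), and $\|\mI-\mW\|_2^2=\max_i(1-\lambda_i)^2$; interpreting the paper's $\lambda_n$ as the extremal eigenvalue of $\mI-\mW$ (i.e.\ $1-\min_i\lambda_i(\mW)$) gives the claim. For \eqref{eq:spect_other}, write $\mI-\gamma(\mW-\mI)=(1+\gamma)\mI-\gamma\mW$, whose eigenvalues are $1+\gamma(1-\lambda_i(\mW))$; for $\gamma>0$ these are $\geq 1$ (again using $\lambda_i\leq 1$), so the spectral norm equals $\max_i[1+\gamma(1-\lambda_i(\mW))]=1+\gamma(1-\min_i\lambda_i(\mW))$, matching the formula up to the standard sign/extremum convention.

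The only real obstacle is a bookkeeping one: carefully matching the paper's indexing convention for the extremal eigenvalues in \eqref{eq:i_minus_W} and \eqref{eq:spect_other} (whether $\lambda_n$ refers to the eigenvalue of $\mW$ of smallest algebraic value, the eigenvalue of $\mI-\mW$ of largest value, or the magnitude-smallest eigenvalue as introduced earlier in the excerpt), and verifying that $\gamma$ is assumed small enough that $(1+\gamma)\mI-\gamma\mW$ is positive semidefinite so that the spectral norm is attained at the largest eigenvalue rather than the negative of the smallest. Once the convention is fixed, each identity follows in one line from the spectral decomposition; no triangle-inequality slack is needed.
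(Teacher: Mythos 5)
Your proof is correct, but it is a genuinely different (and more informative) route than the paper's, because the paper does not actually prove this lemma at all: its entire ``proof'' is the citation \citep[Lemma 16]{choco-convex}. You instead give a self-contained one-pass argument --- orthogonal diagonalization of the symmetric $\mW$, identification of $\frac{1}{n}\mathbf 1\mathbf 1^\top$ with the projector $u_1u_1^\top$ via Perron--Frobenius, and functional calculus for the two polynomial expressions --- which is exactly the right level of machinery and shows that \eqref{eq:w_minus_avg} in fact holds with equality. You also correctly flag the two places where the paper's statement is notationally off: in \eqref{eq:i_minus_W} the spectral norm of $\mI-\mW$ is $\max_i|1-\lambda_i(\mW)| = 1-\lambda_{\min}(\mW)$, which matches ``$\lambda_n$'' only under a different indexing than the magnitude-descending one declared after Assumption \ref{as:mixMatrix}, and in \eqref{eq:spect_other} the extremum should be a $\min$ over $\lambda_i(\mW)$ (as written, with $\max_i\lambda_i(\mW)=1$, the right-hand side collapses to $1$, which is the smallest eigenvalue of $(1+\gamma)\mI-\gamma\mW$, not its norm). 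One small caveat in your preamble: irreducibility of a symmetric doubly stochastic matrix gives simplicity of the Perron eigenvalue $1$ but does not by itself rule out $-1$ as an eigenvalue (a bipartite gossip matrix has $\lambda_n=-1$), so the strict gap $|\lambda_2|<1$ needs aperiodicity (e.g.\ positive diagonal); this does not affect the three identities themselves, which hold for any symmetric doubly stochastic $\mW$, but it does matter for the strictness the paper asserts in its eigenvalue ordering.
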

\begin{proof}
    See \citep[Lemma 16]{choco-convex}.
\end{proof}

\begin{algorithm}[!t]
  \caption{The matrix form of \ourAlg.
    \label{alg:quantmatrix}}
  \begin{algorithmic}[1]
    \State{\bfseries Initialize:\nonumber} $\mX^{(0)}, \mY^{(-1)} = 0$.
    \For{$t\in [0,\hdots, T-1]$} \Comment{All nodes $i$ do in parallel}
    \State{$\mX^{(t+1/2)}=\mX^{(t)}-\eta_t\nabla \bF\left(\mX^{(t)},{\bf \xi}^{(t)}\right) $}
    \State{$\mZ^{(t)}=\mathcal{S}_{\eta_t\mu}\left(\mX^{(t+1/2)}\right)$}\Comment{Proximal optimization by soft-thresholding; see \eqref{eq:soft_thres}.}
    \State{$\bm Q^{(t)}=Q(\mZ^{(t)}-\mY^{(t-1)}$)} \Comment{Residual quantization; see \eqref{eq:q(x)}.}
    \For{$j\in\mathcal{N}_i$}
    \State{\textbf{Encode and send} $\bm Q^{(t)}$} \Comment{Residual communication with source encoding from Alg. \ref{alg:encode}}.
    \State{\textbf{Receive and decode} $\bm Q^{(t)}$} \Comment{Receiver decoding.}
    \State{$\mY^{(t)}=\bm Q^{(t)}+\mY^{(t-1)}$}
    \EndFor
    \State{$\mX^{(t+1)}=\mZ^{(t)}+\gamma_t\mY^{(t)}\mW$}\Comment{Consensus aggregation; cf. (\ref{eq:conse}).}
    \EndFor
  \end{algorithmic}
\end{algorithm}

\subsection{Further Discussions on the Encoding Algorithm}
\label{app:encode}

\textbf{Preliminaries on Source Encoding.}
Elias coding \cite{elias1975universal} is a universal encoding scheme for unknown distributions that are assumed to generally have small integer values. Elias omega coding recursively encodes a value in binary: the string that encodes a value has the binary encoding of the value appended to itself, whose length becomes the subsequent value to be encoded in binary.

Golomb coding \cite{golomb1966run} is an encoding scheme parameterized by an integer $M$ that divides the value to be encoded. The quotient and remainder are separately encoded in unary and binary, respectively. 

\textbf{Algorithm Implementation}. The following algorithm describes the encoding scheme proposed in \cite{woldemariam2023low}. In this scheme, the type vector is assumed to be in descending order, while implementation allows for an unordered type vector. 
In Algorithm \ref{alg:encode}, the notation $[x]$ denotes the nearest integer of $x$. To decode, the first $L$ strings encoded with Elias omega coding are decoded with Elias decoding to retrieve the type vector. The run-lengths are then decoded with Golomb decoding and used to iteratively reconstruct the support vectors. With the positional information encoded through the support vectors, the decoder can fill in values of $\chi_{\ell}$ for all $\ell$.

\begin{algorithm}[!t]
    \caption{The source encoding scheme.}
    \label{alg:encode}
    \begin{algorithmic}[1]
\Function{Encode}{$\vq$, $L$}
\State Compute $\bm t(\vq)$ and encode it with Elias omega encoding.
\State Initialize $\mathcal{I} = \{ 1, \dots, N\}$.
\For{$\ell = 1, \dots, L-1$}
    \State $\mathcal{I}^\prime = \emptyset$, $\mathcal{R}_{\ell} = \emptyset$.
    \State Compute $M_{\ell} = [(\ln 2) (N-\sum_{m\leq \ell} t_m({\vq}))/t_\ell({\vq})]$.
    \State $r = 0$.
    \For{$j \in \mathcal{I}$}
    \If{$\delta(q_j - \ell) = 0$}
    \State $\mathcal{R}_{\ell} = \mathcal{R}_{\ell} \cup \{ r \}$.
    \If{$|\mathcal{R}_{\ell}| < t_{\ell}({\vq})$}
        \State Encode $r$ by Golomb coding with parameter $M_{\ell}$.
    \EndIf
    \State $r = 0$.
    \State $\mathcal{I}^\prime = \mathcal{I}^\prime \cup \{n\}$.
    \Else
    \State $r = r + 1$.
    \EndIf
    \State $\mathcal{I} = \mathcal{I}$ $ \backslash$ $\mathcal{I}^\prime$.
    \EndFor
\EndFor 
    \EndFunction
    \end{algorithmic}
\end{algorithm}

\subsubsection{Proof of Theorem \ref{th:bits}}
\label{proof:bits_num}
\begin{proof}
    We omit the training iteration index $t$ unless otherwise specified. Consider a scalar input $p$ to the proposed quantizer $Q(\cdot)$ and denote $q=Q(p)$ as the quantization output. Recall that the quantizer is given by

\begin{align}
    Q(p)=\frac{1}{\tau} \frac{\max(\vp)-\min(\vp)}{L}\left\lfloor L\left(\frac{p}{\max(\vp)-\min(\vp)}-\frac{\min(\vp)} {\max(\vp)-\min(\vp)} \right)\right\rfloor+\min(\vp),
\end{align}
where $L$ is the number of levels, and $\tau=1+\frac{d}{L^2}$. 
Furthermore let $r_{\vp} =\max({\vp^{(t)}})-\min({\vp^{(t)}})$
We assume for simplicity that $L$ is an odd number. As $d\to\infty$,
\begin{align}
    Q(p)&=\frac{1}{\tau} \frac{r_{\vp}}{L}\left\lfloor\frac{L p}{r_{\vp}}+0.5L\right\rfloor+\min(\pv)\nonumber\\
    &=\frac{1}{\tau} \frac{r_{\vp}}{L}\left\lfloor\frac{L p}{r_{\vp}}+0.5\right\rfloor+\frac{(L-1)(r_{\vp})}{2L \tau}+\min(\pv)\nonumber\\
    &=\frac{1}{\tau} \frac{r_{\vp}}{L}\text{round}\left(\frac{L p}{r_{\vp}}\right)+\frac{(L-1)(r_{\vp})}{2L \tau}+\min(\pv)
\end{align}
This quantizer depends on the two random values
$\overline{p}=\max{\bm p}$ and $\underline{p}=\min{\bm p}$.
First, note that an affine mapping of the output does not change its entropy, so we can study a quantizer that does not center the distribution, since its entropy would be the same as that of $Q(p)$ equals to the entropy of the following quantizer:
\begin{align}
     \tilde Q(p)=\frac{r_{\vp}}{L}\text{round}\left(\frac{L p}{r_{\vp}}\right).
\end{align}
We note that, for the Laplace distribution, we have the following concentration inequality 
 $$\Pr(\max_{1\leq j\leq d}|p_j|>\vartheta \rho)\leq de^{-\vartheta},\forall \vartheta>0.$$
   Setting $\epsilon=de^{-\vartheta}$, we have
\begin{align}\label{eq29}
    \Pr\left(\max_{1\leq j\leq d}|p_j|>(\ln d-\ln \epsilon )\rho_t\right)\leq\epsilon.
\end{align}
Since the input vector has i.i.d. entries and the Laplace distribution is symmetric, we have $r_{\pv}=2\max_{1\leq j\leq d}|q_j|$. Combining this result with \eqref{eq29}, with probability at least $1-\epsilon$ for a small $\epsilon<1$, we have
\begin{align}\label{eq29a}
   \frac{r_{\pv}}{\rho^{(t)}}< 2\ln(d/\epsilon).
\end{align}
 We therefore choose to study a simplififed scenario where the quantizer is adaptive with respect to $\rho^{(t)}$, as opposed to the outcomes of the random vector $\bm p^{(t)}$ and where we choose a time variant but non-random range $r^{(t)} \propto \rho^{(t)}2\ln(d/\epsilon)$ that shrinks as the expected values of the residual entries decrease.

Assume that $p$ follows a zero-mean Laplace distribution with parameter $\rho$. $f(p)=\frac{1}{2\rho}e^{-|p|/\rho}$ and $F(p)=1-e^{-x/\rho}/2$ if $p\geq 0$.
Recall that our bound on the number of bits $d\cdot R(L)$ is given by:
\begin{align}
    d\left(H(q) + 2.914(1 - f_0) + f_0\log_2f_0 + \sum_{\ell=1}^{L-1}f_{\ell}\log_2(1 - \sum_{m=0}^{\ell -1}f_m)\right).
\end{align}
where $f_\ell$ is the shuffled PMFs of the quantized values in the descending order. We shall compute them one by one as follows.
First, omitting the dependence on $t$, the value of $f_0$ is given by
\begin{align}
    f_0&=\Pr(q=0)=\Pr(-\frac{r}{2L}\leq p\leq \frac{r}{2L})\nonumber\\
    &=2(\Pr(p\leq \frac{r}{2L})-0.5)\nonumber\\
    &=1-e^{-\Delta},
\end{align}
where $\Delta=\frac{r}{2L\rho}$.
Furthermore, since the Laplace distribution is symmetric around zero, we have
\begin{align}
    H(q) + f_0\log_2f_0&=-\sum_{\ell=1}^{L-1} f_\ell \log f_\ell\nonumber=-2\sum_{\ell=1}^{(L-1)/2} \tilde f_\ell \log \tilde f_\ell,
\end{align}
where 
\begin{align}
    \tilde f_\ell&=\Pr\left(\frac{(2\ell-1)(r_{\vp})}{2L}\leq q\leq \frac{(2\ell+1)(r_{\vp})}{2L}\right)\nonumber\\
    &=\Pr\left((2\ell-1)\Delta\rho\leq q\leq (2\ell+1)\Delta\rho\right)\nonumber\\
    &=\frac{1}{2}\left(e^{-(2\ell-1)\Delta}-e^{-(2\ell+1)\Delta}\right)\nonumber\\
    &=\sinh(\Delta)e^{-2\ell \Delta}.
\end{align}
Therefore, 
\begin{align}
    &H(q) + f_0\log_2f_0=-2\sum_{\ell=1}^{(L-1)/2} \tilde f_\ell \log \tilde f_\ell\nonumber\\
    &=2\sinh(\Delta)\sum_\ell -e^{-2 \Delta \ell}\log (\sinh(\Delta)e^{-2\ell \Delta})\nonumber\\
    &=2\sinh(\Delta)\sum_\ell -e^{-2 \Delta \ell}\log (e^{-2\ell \Delta})-2\sinh(\Delta)\log (\sinh(\Delta)\sum_\ell (e^{-2 \Delta})^ \ell\nonumber\\
\end{align}
On the other hand,
\begin{align}
    &\sum_{\ell=1}^{L-1}f_{\ell}\log_2(1 - \sum_{m=0}^{\ell -1}f_m)=\sum_{\ell=1}^{L-1}f_{\ell}\log_2(f_\ell+\cdots+f_{L-1})\nonumber\\
    &\leq 2\sum_{\ell=1}^{(L-1)/2} \tilde f_\ell \log(2\sum_{m=\ell}^{(L-1)/2}\tilde f_m)\nonumber\\
    &=2\sum_{\ell} \tilde f_\ell +2\sum_{\ell}\tilde f_\ell\log(\sum_{m=\ell}^{(L-1)/2}\tilde f_m)\nonumber\\
    &=2\sinh(\Delta)\sum_{\ell}e^{-2\ell \Delta}+2\sinh(\Delta)\log \sinh(\Delta)\sum_{\ell}e^{-2\ell \Delta}+2\sinh(\Delta)\sum_{\ell}e^{-2\ell \Delta}\log(\frac{1-e^{-2 \Delta(L/2-1/2-\ell+1)}}{1-e^{-2 \Delta}}e^{-2\Delta\ell})\nonumber\\
    &= 2\sinh(\Delta)(1+\log \sinh(\Delta)-\log (1-e^{-2\Delta}))\sum_{\ell}e^{-2\ell \Delta}+2\sinh(\Delta)\sum_{\ell}e^{-2\ell \Delta}\log(e^{-2\Delta\ell}-e^{-\Delta(L+1)})\nonumber\\
    &\leq 2\sinh(\Delta)(1+\log \sinh(\Delta)-\log (1-e^{-2\Delta}))\sum_{\ell}e^{-2\ell \Delta}+2\sinh(\Delta)\sum_{\ell}e^{-2\ell \Delta}\log(e^{-2\Delta\ell})
\end{align}

Summing all the terms,
\begin{align}
   & R(L) \leq 2.914e^{-\Delta}+2\sinh(\Delta)(1-\log (1-e^{-2\Delta}))\sum_{\ell}e^{-2\ell \Delta}\nonumber\\
    &\leq 2.914e^{-\Delta}+2\sinh(\Delta)\frac{1-\log (1-e^{-2\Delta})}{e^{2\Delta}-1}
    \end{align}
    which proves the theorem.

\end{proof}

\subsection{Proof of Theorem \ref{th:converge}}\label{app:proof_converge}
Before proceeding with the proof we introduce some preliminearies for Proximal-SGD. 
We assume $\mathcal{F}(\vx)=\frac{1}{n}\sum_{i=1}^nF_i(\vx)+h(\vx)$, where $h(\cdot):\R^d\rightarrow \R^d$ is a convex function and where $\norm{h(\vx)}^2\leq B^2$. While in the case of \ourAlg we have defined the proximal operator as  
\begin{align}
    =\argmin_{\vu}\left\{\mu\eta_t\norm{\vu}_1+{\dfrac{1}{2}\norm{\vu-\vz_i^{(t+1)}}^2}\right\},
\end{align}
the more generic proximal operator is defined as
\begin{align}
\arg \min _{\vu \in \R^d}\left\{\langle \vg, \vu\rangle+\frac{1}{\eta} V(\vu, \vx)+h(\vu)\right\},
\end{align} where
\begin{align}  
&V(\vx, \vv)=\theta(\vx)-[\theta(\vz)+\langle\nabla \theta(\vz), \vx-\vz\rangle]\nonumber. 
\end{align}
Here $\theta:\R^d\rightarrow \R$ is a distance generating function satisfying the following:
\begin{align}
\langle \vx-\vv, \nabla \theta(\vx)-\nabla \theta(\vv)\rangle \geq \alpha\|\vx-\vv\|^2, \quad \forall \vx, \vv \in \R^d 
\end{align}, where $\alpha$ is the modulus of strong convexity for $\theta$.
As is standard for Proximal SGD proofs we define the projected gradient with respect to the true function($\nabla F_i(\vx_i)$) as:
\begin{align}
    \vg_i(\vx_i)=\frac{1}{\eta}\left(\vx_i-\vz_i\right).
\end{align} Additionally we define the stochastic projected gradient with respect to $\nabla f_i(\vx_i,\bm{\xi}_i)$ 
\begin{align}
\label{eq:g_tilde}
    \tilde{\vg}_i(\vx_i)=\frac{1}{\eta}\left(\vx_i-\vz_i\right).
\end{align}

Using these above definitions we introduce the following Lemma.
 Before proceeding we introduce the following Lemma \citep{ghadimi2016mini}[Lemma 1]
 \begin{lemma}
     \label{lem:grad_inner_prod_bound}
     If $\vz=\operatorname{prox}_{\eta,h}(\vx)$ and $\tilde{\vg}$ in \eqref{eq:g_tilde}, then for $\forall \vx_i\mX$, $\eta>0$, and $\nabla \bm{F}\in\R^{d}$
     \begin{align}
         \left\langle \nabla\bm{F}(\xv),\vg\left(\vx\right)\right\rangle\geq \alpha\norm{\vg(\vx)}^2+\frac{1}{\eta}\left(h(\vz)-h(\vx)\right)
     \end{align}
 \end{lemma}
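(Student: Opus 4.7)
The plan is to read off the stated inequality directly from the first-order optimality condition of the prox subproblem, using the strong convexity of the distance-generating function $\theta$ to produce the $\alpha\|\vg(\vx)\|^2$ term and the convexity of $h$ to produce the $\frac{1}{\eta}(h(\vz)-h(\vx))$ term. The argument does not need smoothness of $F$ at all; it treats $\nabla \bm{F}(\vx)$ as an arbitrary vector plugged into the prox, which is exactly how \eqref{eq:g_tilde} defines $\vg(\vx)$.

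First I would write down the optimality condition for $\vz = \operatorname{prox}_{\eta,h}(\vx)$ when the prox is taken with respect to the Bregman divergence $V(\cdot,\vx)$: there exists a subgradient $\vs \in \partial h(\vz)$ such that
\begin{align*}
\nabla \bm{F}(\vx) + \tfrac{1}{\eta}\bigl(\nabla\theta(\vz)-\nabla\theta(\vx)\bigr) + \vs = \vzero.
\end{align*}
Solving for $\nabla \bm{F}(\vx)$ and taking the inner product with $\vg(\vx) = \tfrac{1}{\eta}(\vx-\vz)$ gives
\begin{align*}
\langle \nabla \bm{F}(\vx),\vg(\vx)\rangle = \tfrac{1}{\eta^2}\langle \nabla\theta(\vx)-\nabla\theta(\vz),\vx-\vz\rangle - \tfrac{1}{\eta}\langle \vs,\vx-\vz\rangle.
\end{align*}

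Next I would handle the two terms separately. For the first, the $\alpha$-strong convexity of $\theta$ stated in the excerpt yields $\langle \nabla\theta(\vx)-\nabla\theta(\vz),\vx-\vz\rangle \geq \alpha\|\vx-\vz\|^2$, and since $\|\vx-\vz\|^2 = \eta^2\|\vg(\vx)\|^2$, this term is lower bounded by $\alpha\|\vg(\vx)\|^2$. For the second, convexity of $h$ with subgradient $\vs$ at $\vz$ gives $h(\vx) \geq h(\vz) + \langle \vs,\vx-\vz\rangle$, hence $-\langle \vs,\vx-\vz\rangle \geq h(\vz)-h(\vx)$. Dividing by $\eta$ and combining the two bounds yields exactly the claimed inequality.

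I do not anticipate a genuine obstacle here; the only thing to be careful about is that the lemma is stated for the \emph{deterministic} gradient mapping (the gradient inside the inner product is $\nabla \bm{F}(\vx)$, not the stochastic $\nabla f(\vx,\vxi)$), and that $\vg(\vx)$ and $\vz$ in the statement are the ones induced by that same deterministic gradient through the prox. As long as one interprets $\vz$ consistently as $\operatorname{prox}_{\eta,h}(\vx - \eta\nabla \bm{F}(\vx))$ when the outer problem has the form $\min_{\vu}\{\langle \nabla \bm{F}(\vx),\vu\rangle+\tfrac{1}{\eta}V(\vu,\vx)+h(\vu)\}$, the optimality condition above is immediate and the remainder is a two-line calculation. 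The mild subtlety is handling non-differentiable $h$ via a subgradient selection, but since the optimality condition guarantees existence of such an $\vs$, no additional argument is required.
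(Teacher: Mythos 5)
Your proof is correct. Note that the paper does not actually prove this lemma itself---it is imported verbatim as Lemma~1 of the cited reference \citep{ghadimi2016mini}---and your argument (first-order optimality of the prox subproblem yielding a subgradient $\vs\in\partial h(\vz)$, the strong-convexity/monotonicity of $\nabla\theta$ producing the $\alpha\norm{\vg(\vx)}^2$ term, and convexity of $h$ producing the $\frac{1}{\eta}(h(\vz)-h(\vx))$ term) is precisely the standard proof underlying that citation; your closing caveat about interpreting $\vz$ consistently as the prox of the gradient step is also the right one to flag, given that the paper's statement conflates the Bregman form $\min_{\vu}\{\langle\nabla\bm{F}(\vx),\vu\rangle+\frac{1}{\eta}V(\vu,\vx)+h(\vu)\}$ with the Euclidean soft-thresholding form used in the algorithm (which coincide when $\theta=\frac{1}{2}\norm{\cdot}^2$, $\alpha=1$).
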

In order to prove Theorem \ref{th:converge} we need the following consensus lemma.

\begin{lemma}
    \label{th:consensus} Suppose Assumptions \ref{as:mixMatrix}-\ref{as:nice_f} hold. If 
\begin{align}\label{eq13}
\vx_i^{(0)}={\bf 0},\forall i,
\end{align}
where $\tau$ is defined in Lemma \ref{lem:quant}. 
Then, for $\forall t>0$
Lemma \ref{lem:quant} holds:
 	\begin{align}\label{eq_consensus2}
	  &\sum_{i=1}^n  \E\|\vx_i^{(t)}-\overline\vx^{(t)}\|^2\leq \frac{24}{\omega^2}(2G^2+2\sigma^2+\mu^2d)n\eta^2,
	\end{align}
 where $\omega=\frac{(1-|\lambda_2|)^2}{82\tau}$.
\end{lemma}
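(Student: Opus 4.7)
The plan is to work in the matrix form (Algorithm \ref{alg:quantmatrix}) and track a coupled Lyapunov quantity $\Lambda_t := \Phi_t + c\,\Psi_t$, where
\begin{equation*}
\Phi_t := \E\|\mX^{(t)}-\overline{\mX}^{(t)}\|_F^2, \qquad \Psi_t := \E\|\mZ^{(t-1)}-\mY^{(t-1)}\|_F^2,
\end{equation*}
for a weight $c>0$ to be tuned. Because $\mW$ is doubly stochastic, $(\mW-\mI)\vone=\vzero$, so the consensus step preserves the column average: $\overline{\mX}^{(t+1)} = \overline{\mZ}^{(t)}$. This yields the starting identity
\begin{equation*}
\mX^{(t+1)}-\overline{\mX}^{(t+1)} = \bigl(\mZ^{(t)}-\overline{\mZ}^{(t)}\bigr) + \gamma\bigl(\mY^{(t)}-\overline{\mZ}^{(t)}\bigr)(\mW-\mI),
\end{equation*}
which feeds directly into the $\Phi_{t+1}$ recursion.

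\textbf{Bounding the pieces.} I will apply the $(1+\alpha)$-inequality (\ref{eq:a_plus_b}) to expand the square $\|\mX^{(t+1)}-\overline{\mX}^{(t+1)}\|_F^2$. The first term is controlled through the non-expansiveness of soft-thresholding: writing $\sum_i\|\vz_i-\overline{\vz}\|^2=\tfrac{1}{2n}\sum_{i,j}\|\vz_i-\vz_j\|^2$ and using the $1$-Lipschitz property of $\operatorname{prox}_{\eta\mu\|\cdot\|_1}$ gives $\|\mZ^{(t)}-\overline{\mZ}^{(t)}\|_F^2\le\|\mX^{(t+1/2)}-\overline{\mX^{(t+1/2)}}\|_F^2$. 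Substituting $\mX^{(t+1/2)}=\mX^{(t)}-\eta\nabla\bF(\mX^{(t)},\xi^{(t)})$ and applying (\ref{eq:var_bound}) yields a bound in terms of $\Phi_t + 2n\eta^2(G^2+\sigma^2)$. The second term is handled with (\ref{eq:a_times_b}) and the splitting $\mY^{(t)}-\overline{\mZ}^{(t)} = (\mY^{(t)}-\mZ^{(t)}) + (\mZ^{(t)}-\overline{\mZ}^{(t)})$, which brings in $\Psi_{t+1}$ and the already-bounded consensus term. For the $\Psi$-update, the algebraic identity $\mY^{(t)}-\mZ^{(t)} = \mQ^{(t)}-(\mZ^{(t)}-\mY^{(t-1)})$ combined with Lemma \ref{lem:quant} gives
\begin{equation*}
\E\|\mY^{(t)}-\mZ^{(t)}\|_F^2 \le (1-\tfrac{1}{\tau})\,\E\|\mZ^{(t)}-\mY^{(t-1)}\|_F^2,
\end{equation*}
and one more $(1+\alpha)$-inequality relates $\|\mZ^{(t)}-\mY^{(t-1)}\|_F^2$ back to $\Psi_t$, $\Phi_t$, and the local update noise. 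The $\mu^2 d$ contribution in the final bound enters here, through the column-wise estimate $\|\mZ^{(t)}-\mX^{(t+1/2)}\|_\infty\le\mu\eta$ for the soft-thresholded output, giving $\|\mZ^{(t)}-\mX^{(t+1/2)}\|_F^2\le n\mu^2\eta^2 d$.

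\textbf{Closing the recursion and the main obstacle.} Combining the two bounds gives a recursion of the form $\Lambda_{t+1}\le (1-\omega)\Lambda_t + (D/\omega)\,n\eta^2$ with $D\lesssim G^2+\sigma^2+\mu^2 d$; the $1/\omega$ prefactor on the noise term is the price of coupling $\Phi$ and $\Psi$. Since the initialization $\vx_i^{(0)}=\mY^{(-1)}=\vzero$ makes $\Lambda_0$ effectively zero (or $O(\eta^2)$ after the first step), iterating yields the steady-state estimate $\Phi_t\le\Lambda_t\le Dn\eta^2/\omega^2$, which, after absorbing numerical constants, produces exactly (\ref{eq_consensus2}). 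The main obstacle — and the source of the specific constant $\omega=(1-|\lambda_2|)^2/(82\tau)$ — is the joint tuning of $\gamma$, $c$, and the free $(1+\alpha)$ parameters. Intuitively, $\gamma$ must be chosen small enough that the expansive factor $\gamma\|\mI-\mW\|_2$ coming from (\ref{eq:spect_other}) is dominated by the contraction obtained via (\ref{eq:w_minus_avg}) and the $1-1/\tau$ factor from Lemma \ref{lem:quant}, yet large enough that the memory $\mY^{(t)}$ still tracks $\mZ^{(t)}$ closely. The squared spectral gap in $\omega$ arises from having to cancel a $\gamma$ in one place against a $\gamma^2$ elsewhere; the $\tau$ in the denominator is the quantization-induced slowdown. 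The non-smooth proximal step never obstructs this machinery because its $1$-Lipschitzness lets us replace $\mZ^{(t)}$ by $\mX^{(t+1/2)}$ inside every consensus estimate without loss.
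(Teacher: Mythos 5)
Your proposal is correct and follows essentially the same route as the paper: both set up a CHOCO-style Lyapunov recursion coupling the consensus deviation $\E\|\mX^{(t)}-\overline\mX^{(t)}\|_F^2$ with a quantization-memory term, extract a $(1-\omega)$ contraction from the gossip/compression step, treat the local SGD-plus-prox update as a bounded perturbation of order $n\eta^2(G^2+\sigma^2+\mu^2 d)$, and close by a geometric sum from the zero initialization to get the $\eta^2/\omega^2$ steady state. The one place you genuinely diverge is the treatment of the proximal step. The paper writes the prox output as an implicit subgradient step, $\mZ^{(t)}=\mX^{(t)}-\eta(\nabla\bF(\mX^{(t)},\xi^{(t)})+\Phiv^{(t)})$ with $\Phiv^{(t)}$ a subgradient of the $\ell_1$ term whose columns are bounded by $\mu\sqrt d$, and then bounds $\E\|\nabla\bF+\Phiv\|_F^2\le 2n(2G^2+2\sigma^2+\mu^2 d)$ in one shot. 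You instead use non-expansiveness of $\operatorname{prox}_{\eta\mu\|\cdot\|_1}$ via the pairwise identity $\sum_i\|\vz_i-\bar\vz\|^2=\tfrac1{2n}\sum_{i,j}\|\vz_i-\vz_j\|^2$ (correctly, since $\bar\vz$ is not itself a prox output), together with the coordinatewise displacement bound $\|\mZ^{(t)}-\mX^{(t+1/2)}\|_F^2\le n\mu^2\eta^2 d$; both devices yield the same $\mu^2 d$ term, and yours is arguably more elementary since it avoids introducing subgradients of the nonsmooth term. Both your write-up and the paper's leave the specific constant $\omega=(1-|\lambda_2|)^2/(82\tau)$ underived: the paper imports it by citation to the CHOCO-SGD analysis, while you only gesture at the joint tuning of $\gamma$, the coupling weight, and the Young-inequality parameters that would produce it — so on that point neither argument is more complete than the other, but be aware that your sketch would need that cited contraction result (or its re-derivation) to be a full proof.
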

For a proof of Lemma \ref{th:consensus} see Appendix \ref{proof:lemma_con}. Finally, with the above Lemmas we can prove a more generic theorem that implies Theorem \ref{th:converge}. 
\begin{theorem}
    \label{th:int_converge}
    Suppose  $\mathcal{F}(\vx)=\frac{1}{n}\sum_{i=1}^nF_i(\vx)+h(\vx)$, where $F_i$ is as defined in \eqref{eq:obj_func} where $h(\cdot):\R^d\rightarrow \R$ is a convex function and where $\norm{h(\vx)}^2\leq B^2$. If Assumptions \ref{as:mixMatrix}-\ref{as:F_unBiased} are met, then the projected gradient diminishes by:
    \begin{align}
{T+1}\sum_{t=0}^{T}\norm{\vg\left(\bar{\vx}^{(t)}\right)}^2&\leq 2\sqrt{\left(\mathcal{F}\left(\bar{\vx}^{(0)}\right)-\mathcal{F}^*\right)\dfrac{K\left(\sigma^2+4B^2+G\sqrt{2G^2+2\sigma^2+B^2}\right)}{(T+1)n\omega\alpha^2}}\nonumber\\
    &~~~~+7\left(\dfrac{K\sqrt{2G^2+2\sigma^2+B^2}\left(\mathcal{F}\left(\bar{\vx}^{(0)}\right)-\mathcal{F}^*\right)}{\alpha(T+1)\omega}\right)^{\frac{2}{3}}+\dfrac{16K\left(\mathcal{F}\left(\bar{\vx}^{(0)}\right)-\mathcal{F}^*\right)}{T+1}
    \end{align}
\end{theorem}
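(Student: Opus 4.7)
The plan is to derive a per-round descent inequality for the averaged potential $\mathcal{F}(\bar{\vx}^{(t)})$, telescope it over $T$ rounds, and then choose the constant step size $\eta$ to balance the resulting error terms. The key structural observation is that, because $\mW$ is doubly stochastic, the aggregation step (\ref{eq:conse}) preserves row averages, so $\bar{\vx}^{(t+1)} = \bar{\vz}^{(t)} = \bar{\vx}^{(t)} - \eta\,\bar{\tilde{\vg}}^{(t)}$, where $\bar{\tilde{\vg}}^{(t)} := \frac{1}{n}\sum_i \tilde{\vg}_i(\vx_i^{(t)})$. This reduces the dynamics to a centralized-looking update driven by the average of the \emph{local} stochastic projected gradients, allowing a smoothness-based analysis on $\bar{\vx}^{(t)}$ alone.

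First I would apply the $K$-smoothness of $F$ at $\bar{\vx}^{(t)}$ and add $h(\bar{\vx}^{(t+1)}) = h(\bar{\vz}^{(t)}) \leq \frac{1}{n}\sum_i h(\vz_i^{(t)})$ (by convexity of $h$) to both sides, producing an upper bound on $\mathcal{F}(\bar{\vx}^{(t+1)})$. The inner product $\langle \nabla F(\bar{\vx}^{(t)}), \bar{\tilde{\vg}}^{(t)}\rangle$ decomposes into $\frac{1}{n}\sum_i \langle \nabla F_i(\vx_i^{(t)}), \tilde{\vg}_i(\vx_i^{(t)})\rangle$ plus a correction $\frac{1}{n}\sum_i \langle \nabla F(\bar{\vx}^{(t)}) - \nabla F_i(\vx_i^{(t)}), \tilde{\vg}_i(\vx_i^{(t)})\rangle$, which I would control by Young's inequality together with $\|\nabla F(\bar{\vx}) - \nabla F_i(\vx_i)\|^2 \leq 2K^2\|\bar{\vx}-\vx_i\|^2 + 2G^2$; the consensus piece is bounded by Lemma \ref{th:consensus} and the gradient piece by Assumption \ref{as:nice_f}. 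Applying Lemma \ref{lem:grad_inner_prod_bound} to each local inner product then converts $\langle \nabla F_i(\vx_i), \tilde{\vg}_i(\vx_i)\rangle$ into $\alpha\|\tilde{\vg}_i(\vx_i)\|^2 + \frac{1}{\eta}(h(\vz_i)-h(\vx_i))$, whose $h$-difference telescopes against the $h$-term inherited from convexity, yielding a clean per-round bound of the schematic form
\begin{align*}
\E\!\left[\mathcal{F}(\bar{\vx}^{(t+1)})\right] \leq \E\!\left[\mathcal{F}(\bar{\vx}^{(t)})\right] - \tfrac{\alpha\eta}{2}\,\E\|\vg(\bar{\vx}^{(t)})\|^2 + C_1\,\tfrac{\eta(\sigma^2+B^2)}{n} + C_2\,\eta^3(2G^2+2\sigma^2+B^2),
\end{align*}
after a final pass that replaces $\bar{\tilde{\vg}}^{(t)}$ by the deterministic $\vg(\bar{\vx}^{(t)})$ using Assumption \ref{as:F_unBiased} and the nonexpansiveness of $\text{prox}_{\eta,h}$.

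Telescoping this inequality over $t=0,\ldots,T$ and dividing by $\alpha(T+1)\eta/2$ gives $\frac{1}{T+1}\sum_t \E\|\vg(\bar{\vx}^{(t)})\|^2 \leq \frac{2(\mathcal{F}(\bar{\vx}^{(0)})-\mathcal{F}^*)}{\alpha\eta(T+1)} + A\eta + A'\eta^2$, where $A$ absorbs the $(\sigma^2+B^2)/n$ variance/subgradient contribution and $A'$ absorbs the $\omega^{-1}(2G^2+2\sigma^2+B^2)$ consensus contribution. Optimizing the constant $\eta$ yields the three terms in the statement: balancing $1/\eta$ against $A\eta$ gives the leading $\mathcal{O}(1/\sqrt{nT})$ rate (with linear speed-up in $n$, since only the variance piece carries a $1/n$ factor), balancing $1/\eta$ against $A'\eta^2$ gives the $\mathcal{O}(T^{-2/3})$ consensus rate, and the step-size-independent $\mathcal{O}(K F_0/T)$ residual arises from the smoothness offset. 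Theorem \ref{th:converge} then follows by specializing $h = \mu\|\cdot\|_1$, for which $\alpha = 1$ and $B = \mu\sqrt{d}$, absorbing $4B^2 = 4\mu^2 d$ into the $\sigma^2+B^2$ term.

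The main obstacle will be the bookkeeping in the middle step: unlike unregularized SGD, the proximal operator is nonlinear, so $\E[\tilde{\vg}_i(\vx_i)] \neq \vg_i(\vx_i)$ in general. I would control this bias by exploiting the $1$-Lipschitzness of $\text{prox}_{\eta,h}$ to show $\|\tilde{\vg}_i(\vx_i)-\vg_i(\vx_i)\| \leq \|\nabla F_i(\vx_i,\xi_i) - \nabla F_i(\vx_i)\|$, which together with Assumption \ref{as:F_unBiased} keeps its second moment below $\sigma_i^2$. The rest is purely quantitative: arranging constants so the three-term decomposition lands exactly on the stated expression $\sigma^2 + 4B^2 + G\sqrt{2G^2+2\sigma^2+B^2}$, which requires a careful Young's-inequality bookkeeping when controlling both the correction term and the cross-term between deterministic and stochastic projected gradients.
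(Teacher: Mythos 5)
Your overall architecture matches the paper's: use the average-preservation of \eqref{eq:conse} to reduce the dynamics to $\bar{\vx}^{(t+1)}=\bar{\vx}^{(t)}-\eta\,\frac{1}{n}\sum_i\tilde{\vg}_i(\vx_i^{(t)})$, apply $K$-smoothness at $\bar{\vx}^{(t)}$, invoke the Ghadimi--Lan inner-product bound (Lemma \ref{lem:grad_inner_prod_bound}) to recover the composite objective $\mathcal{F}$ and the $-\alpha\eta\norm{\tilde{\vg}}^2$ descent term, bound the variance of the stochastic projected gradient by $\sigma^2+4B^2$, control the network disagreement with Lemma \ref{th:consensus}, and finish by telescoping and tuning the constant $\eta$ (the paper invokes Lemma A.4 of Choco-SGD for exactly the three-way balance you describe). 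The specialization to $h=\mu\norm{\cdot}_1$ with $\alpha=1$, $B=\mu\sqrt{d}$ is also how the paper deduces Theorem \ref{th:converge}.

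The one step that would fail as written is your treatment of the correction term. You decompose the inner product around the \emph{local gradients}, leaving $\frac{1}{n}\sum_i\langle\nabla F(\bar{\vx}^{(t)})-\nabla F_i(\vx_i^{(t)}),\tilde{\vg}_i(\vx_i^{(t)})\rangle$, and propose to control it with Young's inequality together with $\norm{\nabla F(\bar{\vx})-\nabla F_i(\vx_i)}^2\le 2K^2\norm{\bar{\vx}-\vx_i}^2+2G^2$. The $2G^2$ piece is the data-heterogeneity gap $\nabla F(\bar{\vx})-\nabla F_i(\bar{\vx})$; it does not shrink with the consensus error, and although it averages to zero over $i$, that cancellation is unusable here because it is paired with $\tilde{\vg}_i(\vx_i^{(t)})$, which varies with $i$ even at exact consensus. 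After Young's inequality this leaves an additive $O(\eta G^2/\alpha)$ per round; summing over $T+1$ rounds and dividing by $\alpha\eta(T+1)/2$ produces a constant error floor of order $G^2/\alpha^2$ that is absent from the stated bound, in which every term vanishes as $T\to\infty$. The paper avoids this by keeping the correction in the form $\langle\nabla F(\bar{\vx}^{(t)}),\tilde{\vg}(\bar{\vx}^{(t)})-\frac{1}{n}\sum_i\tilde{\vg}_i(\vx_i^{(t)})\rangle$, bounding it by Cauchy--Schwarz with $\norm{\nabla F}\le G$, and using the $(K/\alpha)$-Lipschitzness of the prox-gradient map \eqref{eq:Ksmooth_proj} so that the deviation is proportional to $\norm{\bar{\vx}^{(t)}-\vx_i^{(t)}}$, which is itself $O(\eta)$ by Lemma \ref{th:consensus}; the entire correction is then $O(\eta^2)$ per round and disappears after normalization. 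To repair your argument you would need to route the correction through the projected-gradient deviation rather than the raw gradient deviation (or otherwise cancel the heterogeneity term before any norm bound is taken). Your remaining refinements --- the convexity argument $h(\bar{\vz}^{(t)})\le\frac{1}{n}\sum_i h(\vz_i^{(t)})$ and the nonexpansiveness of the prox to control the stochastic bias --- are consistent with what the paper does implicitly.
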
 
\begin{proof}
    Utilizing Lipschitz smoothness and \eqref{eq:g_tilde} we have:

\begin{align}
 \E_{t+1}\bm{F}\left(\bar{\vx}^{(t+1)}\right) &\leq \bm{F}\left(\bar{\vx}^{(t)}\right)+\E_{t+1}\left\langle\nabla \bm{F}\left(\bar{\vx}^t\right), \bar{\vx}^{(t+1)}-\bar{\vx}^{(t)}\right\rangle+\E_{t+1}\frac{K}{2}\left\|\bar{\vx}^{(t+1)}-\bar{\vx}^{(t))}\right\|^2 \nonumber\\
& \overset{\eqref{eq:g_tilde}}{=}\bm{F}\left(\bar{\vx}^{(t)}\right)-\eta\E_{t+1}\left\langle\nabla \bm{F}\left(\bar{\vx}^{(t)}\right),\frac{1}{n}\sum_{i=1}^n\tilde{\vg}_i\left(\vx^{(t)}_i\right)\right\rangle+\frac{\eta^2K}{2}\E_{t+1}\left\| \frac{1}{n}\sum_{i=1}^n\tilde{\vg}_i\left(\vx^{(t)}_i\right)\right\|^2 \nonumber\\
&=\bm{F}\left(\bar{\vx}^{(t)}\right)-\eta\E_{t+1}\left\langle\nabla\bm{F}\left(\bar{\vx}^{(t)}\right),\tilde{\vg}\left(\bar{\vx}^t\right)\right\rangle +\frac{\eta^2K}{2}\E_{t+1}\norm{\sum_{i=1}^n\tilde{\vg}_i\left(\vx^{(t)}_i\right)}^2\nonumber\\
&~~~~~~~+\eta\E_{t+1}\left\langle \nabla\bm{F}\left(\bar{\vx}^{(t)}\right),\tilde{\vg}\left(\bar{\vx}^{(t)}\right)-\frac{1}{n}\sum_{i=1}^n\tilde{\vg}_i\left(\vx^{(t)}_i\right)\right\rangle\nonumber\\
&\overset{\text{Lemma } \ref{lem:grad_inner_prod_bound}}{\leq}\bm{F}\left(\bar{\vx}^{(t)}\right)-\eta\E_{t+1}\left(\alpha\norm{\tilde{\vg}\left(\bar{\vx}^{(t)}\right)}^2+\frac{1}{\eta}\left(h(\bar{\vz}^{(t+1)})-h(\bar{\vx}^{(t)})\right)\right)+\frac{\eta^2K}{2}\E_{t+1}\norm{\sum_{i=1}^n\tilde{\vg}_i\left(\vx^{(t)}_i\right)}^2\nonumber\\
&~~~~~~~~~~+\eta\E_{t+1}\norm{\nabla\bm{F}\left(\bar{\vx}^{(t)}\right)}\norm{\tilde{\vg}\left(\bar{\vx}^{(t)}\right)-\frac{1}{n}\sum_{i=1}^n\tilde{\vg}_i\left(\vx^{(t)}_i\right)}\nonumber \\
\E_{t+1}\mathcal{F}\left(\bar{\vx}^{(t+1)}\right)&=\mathcal{F}\left(\bar{\vx}^{(t)}\right)-\eta\alpha\E_{t+1}\norm{\tilde{\vg}\left(\bar{\vx}^{(t)}\right)}^2+\frac{\eta^2K}{2}\E_{t+1}\norm{\sum_{i=1}^n\tilde{\vg}_i\left(\vx^{(t)}_i\right)}^2\nonumber\\
&~~~~~~~~~+\E_{t+1}\eta\norm{\nabla\bm{F}\left(\bar{\vx}^{(t)}\right)}\norm{\tilde{\vg}\left(\bar{\vx}^{(t)}\right)-\frac{1}{n}\sum_{i=1}^n\tilde{\vg}_i\left(\vx^{(t)}_i\right)}\nonumber
\end{align}
Note that \begin{align}
\label{eq:Ksmooth_proj}
    \norm{\vg(\vx)-\vg(\vy)}\leq\frac{1}{\alpha}\norm{\nabla \bm{F}(\vx)-\nabla \bm{F}(\vy)}\leq \frac{K}{\alpha}\norm{\vx-\vy}
\end{align}
Using \eqref{eq:Ksmooth_proj}, the $K$ smoothness of $\bm{F}$, and the definition of variance we have:
\begin{align}
\E_{t+1}\mathcal{F}\left(\bar{\vx}^{(t+1)}\right)&\leq\mathcal{F}\left(\bar{\vx}^{(t)}\right)-\eta\alpha\norm{\tilde{\vg}\left(\bar{\vx}^{(t)}\right)}^2+\frac{\eta GK}{\alpha n}\sum^{n}_{i=1}\norm{\bar{\vx}^{(t)}-\vx_i^{(t)}}\nonumber\\
&~~~~~~~~
+\frac{\eta^2K}{2}\left(\E_{t+1}\left[\norm{\frac{1}{n}\sum_{i}^n\left(\tilde{\vg}_i\left(\vx^{(t)}_i\right)-\vg_i\left(\vx^{(t)}_i\right)\right)}\right] +\norm{\frac{1}{n}\sum_{i=1}^n\vg_i\left(\vx_i^{(t)}\right)- \vg\left(\bar{\vx}^{(t)}\right)+\vg\left(\bar{\vx}^{(t)}\right)}^2_2\right)\nonumber
\end{align}
By Assumption \ref{as:nice_f} and since $\norm{h(\vx)}^2\leq B^2$ we know that $\Var\left[\tilde{\vg}_i\left(\vx_i\right)\right]\leq \sigma^2+4B^2$ we have:
\begin{align}
\E_{t+1}\mathcal{F}\left(\bar{\vx}^{(t+1)}\right)&\leq\mathcal{F}\left(\bar{\vx}^{(t)}\right)-\eta\alpha\norm{\tilde{\vg}\left(\bar{\vx}^{(t)}\right)}^2+\frac{\eta GK}{\alpha n}\sum^{n}_{i=1}\norm{\bar{\vx}^{(t)}-\vx_i^{(t)}}\nonumber\\
&~~~~~~~~ + \frac{\eta^2K}{2}\left[\frac{\sigma^2+4B}{n}+\frac{2}{n}\sum_{i=1}^n\norm{\vg\left(\bar{\vx}^{(t)}\right)-\vg_i\left(\vx^{(t)}_i\right)}^2+2\norm{\vg\left(\bar{\vx}^{(t)}\right)}^2\right]\nonumber
\end{align}
By \eqref{eq:Ksmooth_proj} and Lemma \ref{th:consensus}
\begin{align}
\E_{t+1}\mathcal{F}\left(\bar{\vx}^{(t+1)}\right)&\leq\mathcal{F}\left(\bar{\vx}^{(t)}\right)-\eta\alpha\left(1-K\eta\right)\norm{\tilde{\vg}\left(\bar{\vx}^{(t)}\right)}^2+\frac{\eta^2 K(\sigma^2+4B)}{n} \nonumber \\
&~~~~~~~~+\dfrac{\eta^2\sqrt{24}GK}{\alpha\omega\sqrt{n}}\sqrt{2G^2+2\sigma^2+B^2}+\frac{\eta^4K^324\left(2G^2+2\sigma^2+B^2\right)}{\omega^2}\nonumber
\end{align}
Finally, utilizing that $\eta\leq \frac{1}{4K}$ we have
\begin{align}
\E_{t+1}\mathcal{F}\left(\bar{\vx}^{(t+1)}\right)&\leq\mathcal{F}\left(\bar{\vx}^{(t)}\right)-\frac{\eta}{2\alpha}\norm{\tilde{\vg}\left(\bar{\vx}^{(t)}\right)}^2+\dfrac{\eta^2K\left(\sigma^2+4B^2+G\sqrt{2G^2+2\sigma^2+B^2}\right)}{n\omega\alpha}+\frac{\eta^3K^26\left(2G^2+2\sigma^2+B^2\right)}{\omega^2}\nonumber
\end{align}
$\implies$
\begin{align}
\dfrac{1}{T+1}\sum_{t=0}^{T}\norm{\vg\left(\bar{\vx}^{(t)}\right)}^2&\leq\dfrac{2\left(\mathcal{F}\left(\bar{\vx}^{(0)}\right)-\mathcal{F}^*\right)}{\alpha\eta(T+1)}+\dfrac{\eta2K\left(\sigma^2+4B^2+G\sqrt{2G^2+2\sigma^2+B^2}\right)}{n\omega\alpha}+\frac{\eta^2K^212\left(2G^2+2\sigma^2+B^2\right)}{\omega^2}
\end{align}

Finally, by applying \citep{choco}[Lemma A.4] we have
\begin{align}
\dfrac{1}{T+1}\sum_{t=0}^{T}\norm{\vg\left(\bar{\vx}^{(t)}\right)}^2&\leq 2\sqrt{\left(\mathcal{F}\left(\bar{\vx}^{(0)}\right)-\mathcal{F}^*\right)\dfrac{K\left(\sigma^2+4B^2+G\sqrt{2G^2+2\sigma^2+B^2}\right)}{(T+1)n\omega\alpha^2}}\nonumber\\
    &~~~~+7\left(\dfrac{K\sqrt{2G^2+2\sigma^2+B^2}\left(\mathcal{F}\left(\bar{\vx}^{(0)}\right)-\mathcal{F}^*\right)}{\alpha(T+1)\omega}\right)^{\frac{2}{3}}+\dfrac{16K\left(\mathcal{F}\left(\bar{\vx}^{(0)}\right)-\mathcal{F}^*\right)}{T+1}
\end{align}
\end{proof}
Theorem \ref{th:int_converge} directly implies Theorem \ref{th:converge}. Additionally it gives us the following corollary.

\begin{corollary}
    Under Assumptions \ref{as:mixMatrix}-\ref{as:F_unBiased} and $\mathcal{F}$ as defined \eqref{eq:obj_func}, then \ourAlg converges with rate $\mathcal{O}\left(1\sqrt{nT\omega}+1/(T\omega)^{2/3}\right)$. 
\end{corollary}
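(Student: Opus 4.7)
The plan is to obtain the corollary as a direct specialization of Theorem \ref{th:int_converge} to the particular objective $\mathcal{F}(\vx)=\tfrac{1}{n}\sum_i F_i(\vx)+\mu\lVert\vx\rVert_1$ optimized by \ourAlg, followed by an asymptotic simplification. Since Theorem \ref{th:int_converge} already proves the non-asymptotic bound for a generic convex regularizer $h$ with subgradients uniformly bounded by $B$, nothing new needs to be established about the algorithm; the task reduces to identifying $h$, $B$, and $\alpha$ and then reading off the order of the three terms.

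First I would identify the parameters. The regularizer $h(\vx)=\mu\lVert\vx\rVert_1$ is proper, convex, and lower semicontinuous, so it fits the framework of Theorem \ref{th:int_converge}. Every element $\vs\in\partial h(\vx)$ satisfies $\lVert\vs\rVert_\infty\leq\mu$, hence $\lVert\vs\rVert^2\leq\mu^2 d$, which supplies the uniform subgradient bound with $B^2=\mu^2 d$. The proximal operator used by \ourAlg in \eqref{eq:soft_thres} is the standard Euclidean one, corresponding to distance-generating function $\theta(\vx)=\tfrac{1}{2}\lVert\vx\rVert^2$, which is $1$-strongly convex and thus gives modulus $\alpha=1$ in Lemma \ref{lem:grad_inner_prod_bound}. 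Substituting $B^2=\mu^2 d$ and $\alpha=1$ into the bound of Theorem \ref{th:int_converge} recovers the statement of Theorem \ref{th:converge} verbatim.

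Next I would read off the asymptotic orders of the three surviving terms, treating $K$, $G$, $\sigma$, $\mu$, $d$, and $\mathcal{F}(\bar{\vx}^{(0)})-\mathcal{F}^*$ as problem-dependent constants. The first term is $\mathcal{O}(1/\sqrt{nT\omega})$, the second is $\mathcal{O}(1/(T\omega)^{2/3})$, and the third is $\mathcal{O}(1/T)$. Because $\omega=(1-|\lambda_2|)^2/(82\tau)\leq 1$, we have $(T\omega)^{2/3}\leq T^{2/3}\leq T$, so $1/T\leq 1/(T\omega)^{2/3}$ and the third term is absorbed into the second. Summing the two remaining dominant contributions yields the stated rate $\mathcal{O}(1/\sqrt{nT\omega}+1/(T\omega)^{2/3})$.

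There is no real obstacle here; the only subtleties are interpretive. One must read the hypothesis $\lVert h(\vx)\rVert^2\leq B^2$ in Theorem \ref{th:int_converge} as a bound on the norms of subgradients of $h$ (which is how it is actually used in that proof, via $\mathrm{Var}[\tilde{\vg}_i]\leq \sigma^2+4B^2$) rather than as a pointwise bound on $h$ itself, since $\mu\lVert\vx\rVert_1$ is obviously not uniformly bounded. The other minor check is that $\omega\leq 1$, which is immediate from the definition and $\tau>1$. With these two observations, the corollary is a one-line consequence of Theorem \ref{th:int_converge}.
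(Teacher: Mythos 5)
Your proposal is correct and follows exactly the route the paper takes: its proof of the corollary is literally to apply the generic theorem with $h=\mu\lVert\cdot\rVert_1$ and $\alpha=1$ for the Euclidean prox. Your additional care in reading $\lVert h(\vx)\rVert^2\le B^2$ as a subgradient bound (giving $B^2=\mu^2 d$) and in absorbing the $\mathcal{O}(1/T)$ term via $\omega\le 1$ simply makes explicit what the paper leaves implicit.
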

\begin{proof}
    Apply Theorem \ref{th:converge} and note that $h=\mu\norm{\cdot}_1$ and that since we are using the Euclidean projection $\alpha=1$.
\end{proof}

\subsection{Lemma and Proposition Proofs}

\subsubsection{Proof of Lemma \ref{th:consensus}}
\label{proof:lemma_con}
\begin{proof}
    Define $\Phi^{(t+1)}\in\partial \left(\mu\lnorm\Xv^{(t+1)}\rnorm_{1,1}\right)$.
We observe the following property: 
{\observation{Every subgradient of the regularization function $\phiv\in \partial (\mu\norm{\xv}_{1})$ has a bounded norm as $\norm{\phiv}_2\leq \mu \sqrt{d}$. Under Assumption \ref{as:nice_f},  we have

 $\E[\norm{\nabla F(\Xv;{\bf \xi})+\Phiv}_F^2]\leq 2n(2G^2+2\sigma^2+\mu^2d)$ for any $\phiv_i\in \partial(\mu\norm{\xv_i}_1)$, $\Xv$, and $\bf \xi$.
}}

Define a sequence $\{r_t\}_t$ such that
\begin{align}\label{appa_eq03}
    r_t=\E\left[\lnorm\Xv^{(t)}-\overline \Xv^{(t)}\rnorm_F^2+\lnorm\Xv^{(t)}- \Yv^{(t)}\rnorm_F^2\right].
\end{align}
In order for us to determine the consensus rate of \ourAlg we need our aggregation scheme to satisfies the following(Assumption 3 \cite{choco}):
\begin{assumption}
\label{as:layp}
For an averaging scheme $h: \mathbb{R}^{d \times n} \times \mathbb{R}^{d \times n} \rightarrow \mathbb{R}^{d \times n} \times \mathbb{R}^{d \times n}$ let $\left(\mX^{+}, \mY^{+}\right):=$ $h(\mX, \mY)$ for $\mX, \mY \in \mathbb{R}^{d \times n}$. Assume that $h$ preserves the average of iterates:
$$
\mX^{+} \frac{11^{\top}}{n}=X \frac{11^{\top}}{n}, \quad \forall \mX, \mY \in \mathbb{R}^{d \times n}
$$
and that it converges with linear rate for a parameter $0<c \leq 1$
$$
\mathbb{E}_h \Psi\left(\mX^{+}, \mY^{+}\right) \leq(1-c) \Psi(\mX, \mY), \quad \forall \mX, \mY \in \mathbb{R}^{d \times n},
$$
and Laypunov function $\Psi(\mX, \mY):=\|\mX-\bar{\mX}\|_F^2+\|\mX-\mY\|_F^2$ with $\bar{\mX}:=\frac{1}{n} X 11^{\top}$, where $\mathbb{E}_h$ denotes the expectation over internal randomness of averaging scheme $h$.
\end{assumption}
Our aggregation scheme satisfies Assumption \ref{as:layp} with:
\begin{align}
\E_Q\norm{\mX^{(t+1)}-\overline{\mX}^{(t+1)}}_F^2+\norm{\mX^{(t+1)}+\mY^{(t+1)}}_F^2\leq (1-\omega)\left[\norm{\overline{\mZ}^{(t)}-\mZ^{(t)}}_F^2+\norm{\mY^{(t)}-\mZ^{(t)}}_F^2\right]
\end{align}
where $\omega=\frac{(1-|\lambda_2|)^2}{82\tau}$. See \citep[Proof of Theorem 2]{choco-convex} for the details and note that $\mX=\mZ$ and $\hat{\mX}=\mY$. 
Therefore, at iteration $t+1$, we have 
\begin{align*}
   r_{t+1} & \leq \E\left[(1-\omega)\left(\norm{\overline{\mZ}^{(t)}-\mZ^{(t)}}_F^2+\norm{\mY^{(t)}-\mZ^{(t)}}\right)\right]\\
   &=(1-\omega)\E\left[\left(\mX^{(t)}-\eta\left(\nabla\bm{F}\left(\mX^{(t)},\bm{\xi}^{(t)}\right)\right)\right)\left(\frac{\vone\vone^{\top}}{n}-\mI\right)\right]\nonumber\\
   &~~~~~+(1-\omega)\E\left[\norm{\mY^{(t)}-\left(\mX^{(t)}-\eta\left(\nabla\bm{F}\left(\mX^{(t)},\bm{\xi}^{(t)}\right)\right)\right)}\right]\\
   &=(1-\omega)\E\left[\operatorname{prox}_{\eta,g}\left(\mX^{(t)}-\eta\left(\nabla\bm{F}\left(\mX^{(t)},\bm{\xi}^{(t)}\right)\right)\right)\left(\frac{\vone\vone^{\top}}{n}-\mI\right)\right]\nonumber\\
   &~~~~~+(1-\omega)\E\left[\norm{\mY^{(t)}-\operatorname{prox}_{\eta,g}\left(\mX^{(t)}-\eta\left(\nabla\bm{F}\left(\mX^{(t)},\bm{\xi}^{(t)}\right)\right)\right)}\right]\\
   &\leq(1-\omega)(1+\alpha^{-1})\E\left[\norm{\overline{\mX}^{(t)}-\mX^{(t)}}_F^2+\norm{\mY^{(t)}-\mX^{(t)}}^2_F\right]\\
   &~~~~~+(1-\omega)(1+\alpha)\eta^2\E\left[\norm{\left(\nabla\bm{F}\left(\mX^{(t)},\bm{\xi}^{(t)}\right)+\bm{\Phi}^{(t)}\right)\left(\frac{\vone\vone^{\top}}{n}-\mI\right)}_F^2+\norm{\nabla\bm{F}\left(\mX^{(t)},\bm{\xi}^{(t)}\right)+\bm{\Phi}^{(t)}}_F^2\right]\\
   &\overset{\eqref{eq:a_times_b}}{\leq}(1-\omega)\left((1+\alpha^{-1})\E\left[\norm{\overline{\mX}^{(t)}-\mX^{(t)}}_F^2+\norm{\mY^{(t)}-\mX^{(t)}}^2_F\right]+\eta^2(1+\alpha)4n\left(2G^2+2\sigma^2+\mu^2d\right)\right)\\
   &\overset{\alpha=\frac{2}{\omega}}{\leq}\left(1-\frac{\omega}{2}\right)\E\left[\norm{\overline{\mX}^{(t)}-\mX^{(t)}}_F^2+\norm{\mY^{(t)}-\mX^{(t)}}^2_F\right]+\frac{2}{\omega}\eta^26n\left(2G^2+2\sigma^2+\mu^2d\right)
\end{align*}
Letting $A=6n\left(2G^2+2\sigma^2+\mu^2d\right)$ and noticing that we have the following recursion:
\begin{align}
     r_{t+1}\leq (1-\omega/2)r_t+\frac{2}{\omega}\eta_t^2A,\nonumber
\end{align}
Since $r_0\leq\eta^2\frac{4}{\omega^2}A$ and $\overline{\mX}^{(0)}=\mX^{(0)}=\mY^{(0)}=0$ we have that:
\begin{align}
    r_{t+1}\leq\left(1-\frac{\omega}{2}\right)r_t+\frac{2}{\omega}\eta^2A\leq\left(1-\frac{\omega}{2}\right)\frac{4}{\omega^2}\eta^2A+\frac{2}{\omega}\eta^2A=\frac{4}{\omega^2}\eta^2A
\end{align}
\end{proof}

\section{Experimental Results}
\label{app:add_exp}
Unless otherwise stated, for all of the algorithms we fixed the time-varying parameters and only considered their constant variants. This allows for a more direct comparison with the state of the art since both \cite{choco} and \cite{sparsesgt} utilize constant hyper-parameters. We perform hyperparameter tunning over the following sets: $\gamma=\{1,.8,.6\}$ and $\eta=\{.02,.05,.1,.15,,2\}$. We set the mini-batch size to 20 and 64 to be consistent with the prior art. Furthermore, we fix the precision level for each algorithm so that $L=8$. For \textsc{Choco-SGD} and \ourAlg we found $\gamma=1$,$\eta=.2$ For CDProx-SGT $\gamma=1$ and $\eta=.02$. For \ourAlg we used $\mu=7e-6$ while CDProxSGT used $\mu=7e-8$ We train for a total of 10,000 iterations and run the experiments multiple times taking the average. 

\subsection{Ring Topology}
For both experiments, we considered a ``ring-like" networking topology composed of 10 nodes. The network topology and the corresponding mixing matrix are shown in Figure \ref{fig:ringtop}.  
\subsubsection{CIFAR10}
Under this setup, each node was given an i.i.d. subset of the 50,000 training samples. Each node has the untrained ResNet18 model.

\subsubsection{MNIST}
Under this set up each node was given an i.i.d subset of the 60,000 training samples. The DNN consists of three fully connected layers with total $d=669,706$ parameters.
\subsection{Fully Connected Topology}
Here, we consider a fully-connected network topology, i.e., $\Wv={\bf 1}{\bf 1}^T/n$.

\subsubsection{CIFAR10}
Due to memory constraints, we only considered a network with 5 nodes instead of 10.
\subsubsection{MNIST}
This setup is identical to the ring topology except we changed the learning rate for CDProxSGT from $\eta=.02$ to $\eta=.2$. CDProxSGT would not converge with smaller learning rates.

\begin{figure}[!t]

	\begin{minipage}{0.49\linewidth}
		\centering
		\includegraphics[width=3.2 in]{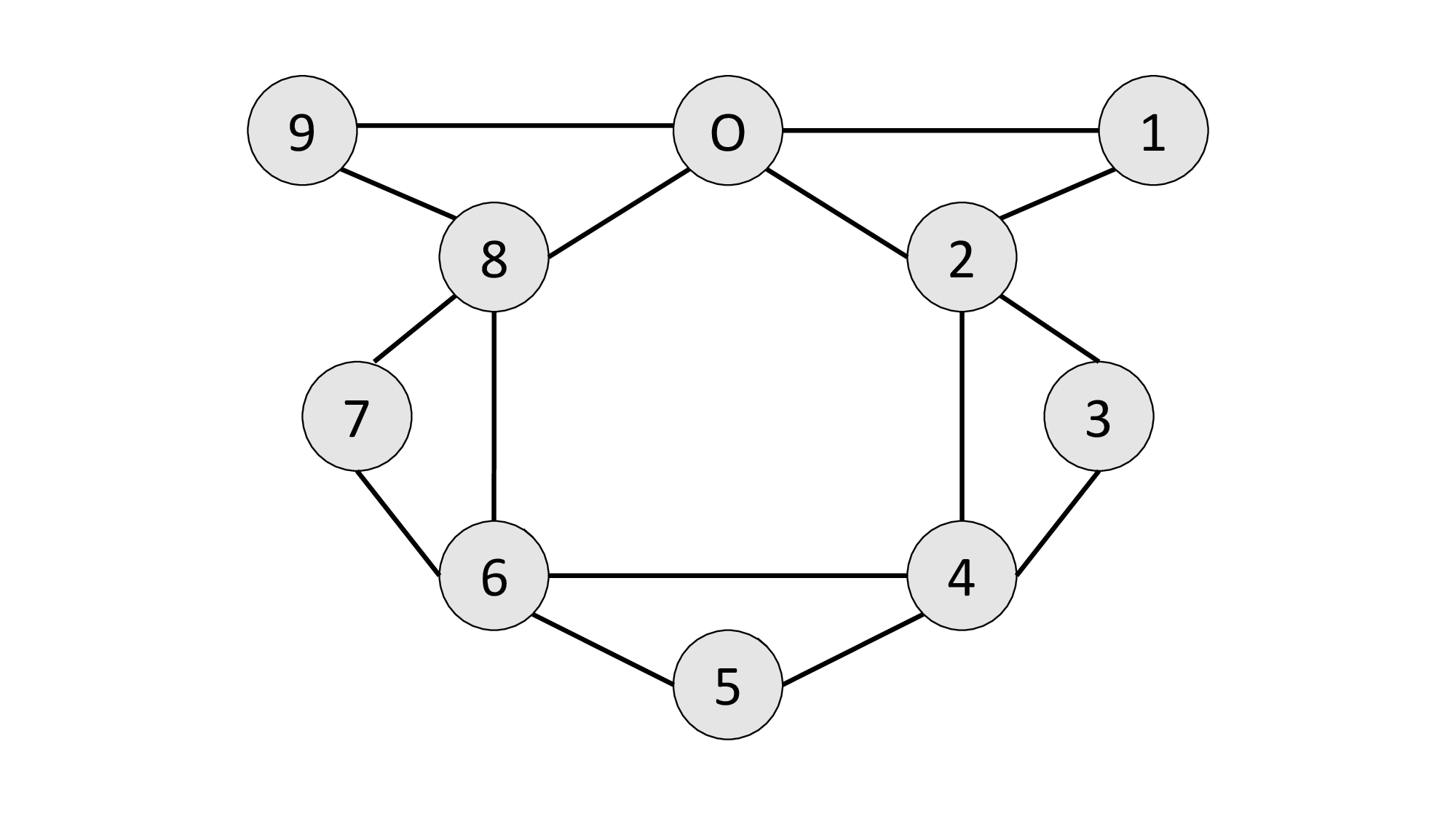}
	\end{minipage}
	\begin{minipage}{0.49\linewidth}
		\centering
		\begin{align*}\tiny
\setlength{\delimitershortfall}{0pt}
\begin{bmatrix}
    \frac{1}{5} & \frac{1}{5} & \frac{1}{5} & 0 & 0 & 0 & 0 & 0 & \frac{1}{5} &\frac{1}{5}\\[2ex] 
    \frac{1}{5} & \frac{3}{5} & \frac{1}{5} & 0 & 0 & 0 & 0 & 0 & 0 & 0\\[2ex] 
    \frac{1}{5} & \frac{1}{5} & \frac{1}{5} & \frac{1}{5} & \frac{1}{5} & 0 & 0 & 0 & 0 & 0 \\[2ex] 
    0 & 0 & \frac{1}{5} & \frac{3}{5} & \frac{1}{5} & 0 & 0 & 0 & 0 & 0\\[2ex] 
    0 & 0 & \frac{1}{5} & \frac{1}{5} & \frac{1}{5} & \frac{1}{5} & \frac{1}{5} & 0 & 0 & 0 \\[2ex] 
    0 & 0 & 0 & 0 & \frac{1}{5} & \frac{3}{5} & \frac{1}{5} & 0 & 0 & 0 \\[2ex]
    0 & 0 & 0 & 0 & \frac{1}{5} & \frac{1}{5} & \frac{1}{5} & \frac{1}{5} & \frac{1}{5} & 0\\[2ex]
    0 & 0 & 0 & 0 & 0 & 0 & \frac{1}{5} & \frac{3}{5} & \frac{1}{5} & 0\\[2ex]
    \frac{1}{5} & 0 & 0 & 0 & 0 & 0 & \frac{1}{5} & \frac{1}{5} & \frac{1}{5} & \frac{1}{5}\\[2ex]
    \frac{1}{5} & 0 & 0 & 0 & 0 & 0 & 0 & 0 & \frac{1}{5} & \frac{3}{5} 
\end{bmatrix}
  \end{align*}
	\end{minipage}%
 \caption{Left: The Ring-Like network topology. Circles denote the devices and edges denote connection links, where self-loops are omitted in the plot for brevity. Right: The corresponding mixing matrix $\bf W$.}
 \label{fig:ringtop}
\end{figure}

\newpage
\text{ }
\newpage

\end{document}